\documentclass{article}
\usepackage{iclr2027_conference,times}

\usepackage{amsmath,amsfonts,bm}

\def\eqref#1{equation~\ref{#1}}

\def\1{\bm{1}}

\DeclareMathAlphabet{\mathsfit}{\encodingdefault}{\sfdefault}{m}{sl}
\SetMathAlphabet{\mathsfit}{bold}{\encodingdefault}{\sfdefault}{bx}{n}

\usepackage{hyperref}
\usepackage{url}
\usepackage{graphicx}
\usepackage{comment}
\usepackage{booktabs}
\usepackage{multirow}
\usepackage{rotating}
\usepackage{graphicx}
\usepackage{kotex}
\usepackage[table]{xcolor}
\definecolor{iclrlink}{rgb}{0.10,0.20,0.55}
\hypersetup{colorlinks=true, linkcolor=iclrlink, citecolor=iclrlink, urlcolor=iclrlink}
\usepackage{algorithm}
\usepackage{algpseudocode}

\title{VolS-GS: Relightable Gaussian Splatting \\ with Volumetric Subsurface Scattering}

\author{Junyeong Ahn, \; Jaegul Choo\thanks{Corresponding author.} \\
KAIST AI \\
\texttt{\{justinahn, jchoo\}@kaist.ac.kr}
}

\usepackage{amsthm}
\usepackage{capt-of}
\usepackage{wrapfig}
\usepackage{placeins}
\AtBeginDocument{\abovedisplayshortskip=\abovedisplayskip \belowdisplayshortskip=\belowdisplayskip}
\newtheorem{lemma}{Lemma}
\newtheorem{proposition}{Proposition}

\newtheorem{remark}{Remark}

\iclrfinalcopy
\renewcommand{\headrulewidth}{0pt}
\begin{document}

\setcounter{footnote}{1}
\maketitle

\vspace{-0.20in}
\centerline{\url{justin4ai.github.io/VolS-GS}}

\begin{center}
\vspace{-0.15\baselineskip}
\includegraphics[width=\linewidth]{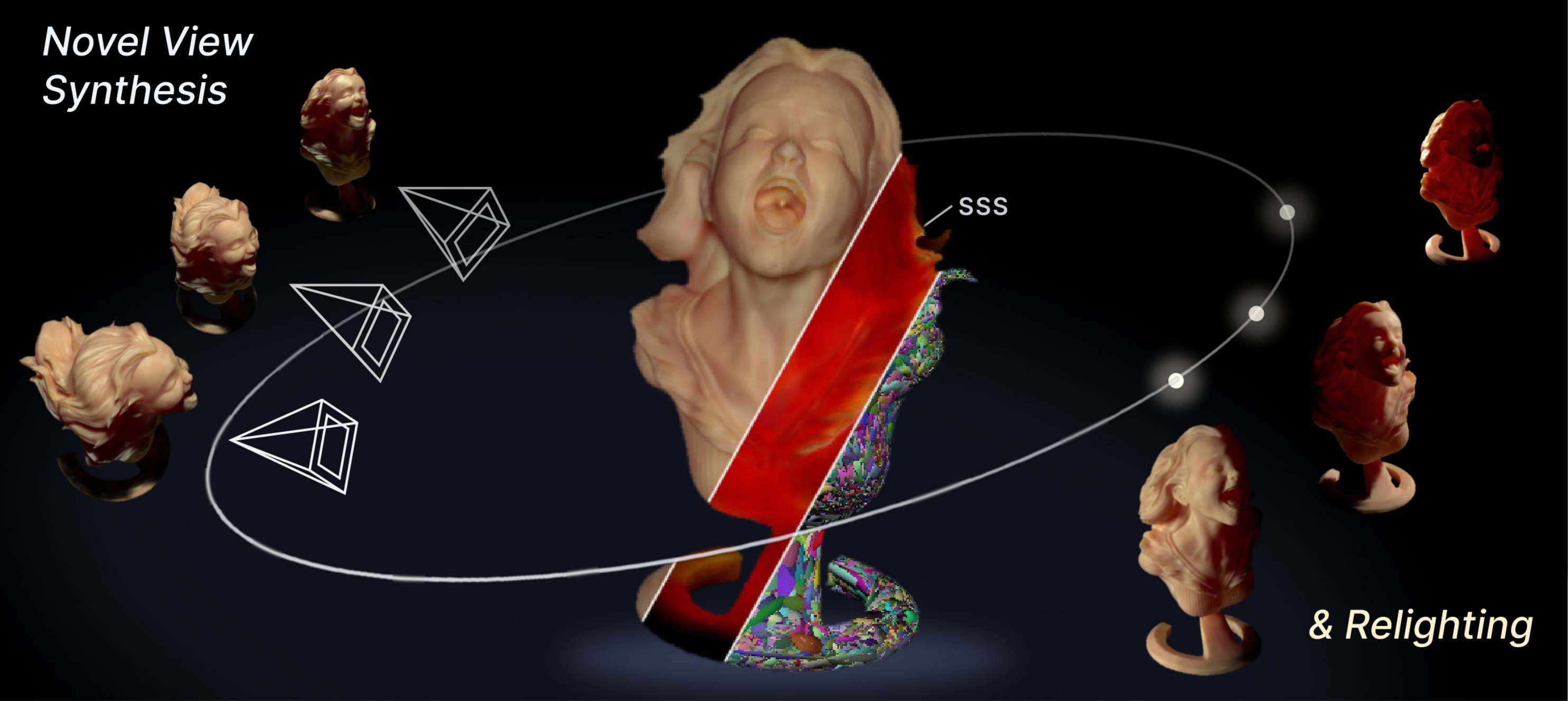}
\captionof{figure}{\textbf{VolS-GS} proposes a high-quality relighting pipeline with three different modalities. \textbf{Center:} a rendered image can be decomposed into normal, diffuse, specular, subsurface scattering (SSS) and shadow. \textbf{Left:} VolS-GS faithfully synthesizes novel view renders. \textbf{Right:} relighting.}
\label{fig:teaser}
\vspace{2mm}
\end{center}

\begin{abstract}
We present VolS-GS, a relightable Gaussian splatting framework that reconstructs objects from
one-light-at-a-time (OLAT) captures and renders them under novel lighting and viewpoints.
Relightable Gaussian Splatting methods typically model appearance independently at each
primitive, which makes non-local effects difficult to represent. This limitation is particularly
apparent for subsurface scattering, where light entering the object at one location can emerge
at another. Rather than modeling this effect solely with a neural network or a local kernel at
each primitive, we use the spatial support of the Gaussian scene as the domain of a
differentiable finite-volume transport solver, so that light can propagate through the object's
interior. A small network predicts scattering and absorption coefficients for each Gaussian, and
the solve redistributes incident light through the resulting field. The coefficients are fit to
images rather than measured, so the solve supplies a transport-shaped path for aggregating
per-primitive appearance, not a measurement of the material. To keep the learned shadow and specular
terms from taking over the other components, our shadow term is predicted from visibility
together with the transmittances and the scattering the solve produces, and a regularizer
suppresses specular highlights in regions the shadow term predicts to be unlit. Experiments on three OLAT
benchmarks show that VolS-GS consistently improves relighting quality on held-out lights and
views.

\end{abstract}

\section{Introduction}
\label{sec:intro}

Relighting a captured object requires predicting its appearance under lighting conditions that were not observed during capture. One-light-at-a-time (OLAT) capture provides a controlled setting for this task by recording each image under a known point light, and the model must render the object from new viewpoints and under new light positions.
Neural radiance fields made relighting practical by learning appearance as a function of position and illumination~\citep{zeng2023nrhints,jin2023tensoir}, and 3D Gaussian splatting~\citep{kerbl20233dgs} replaced the implicit field with an explicit set of spatial primitives that renders far faster. Existing Gaussian approaches~\citep{bi2024gs3,dihlmann2024sssgs,zheng2026ssdgs} give every primitive its own
appearance model, evaluated from that primitive's own attributes.

Non-local transport is not beyond the representation. Radiosity-GS~\citep{jiang2025radiositygs}
transports light between primitives with a differentiable radiosity solve, and Ref-Gaussian~\citep{yao2025refgaussian} traces
inter-reflections among them. Both route light \emph{around} the object, through the space outside it. Subsurface scattering routes
light \emph{through} it. The light enters at one surface point, travels inside the medium, and leaves at another. No Gaussian method
solves for that interior path. SSS-GS~\citep{dihlmann2024sssgs} conditions a scattering term on the light direction and
SSD-GS~\citep{zheng2026ssdgs} evaluates a closed-form diffusion kernel at each primitive, so in both the subsurface response remains a
function of one primitive's own attributes. A BSSRDF couples light entering the object to radiance emerging elsewhere, which introduces a surface integral and makes direct evaluation expensive. Classical diffusion models reduce this coupling to a local kernel under assumptions such as a homogeneous medium and a simple boundary~\citep{jensen2001bssrdf, donner2005light, habel2013photon}. A per-primitive evaluation of such a kernel therefore has no interior path (Figure~\ref{fig:kernel}).

\begin{figure}[!htb]
    \centering
    \includegraphics[width=\linewidth]{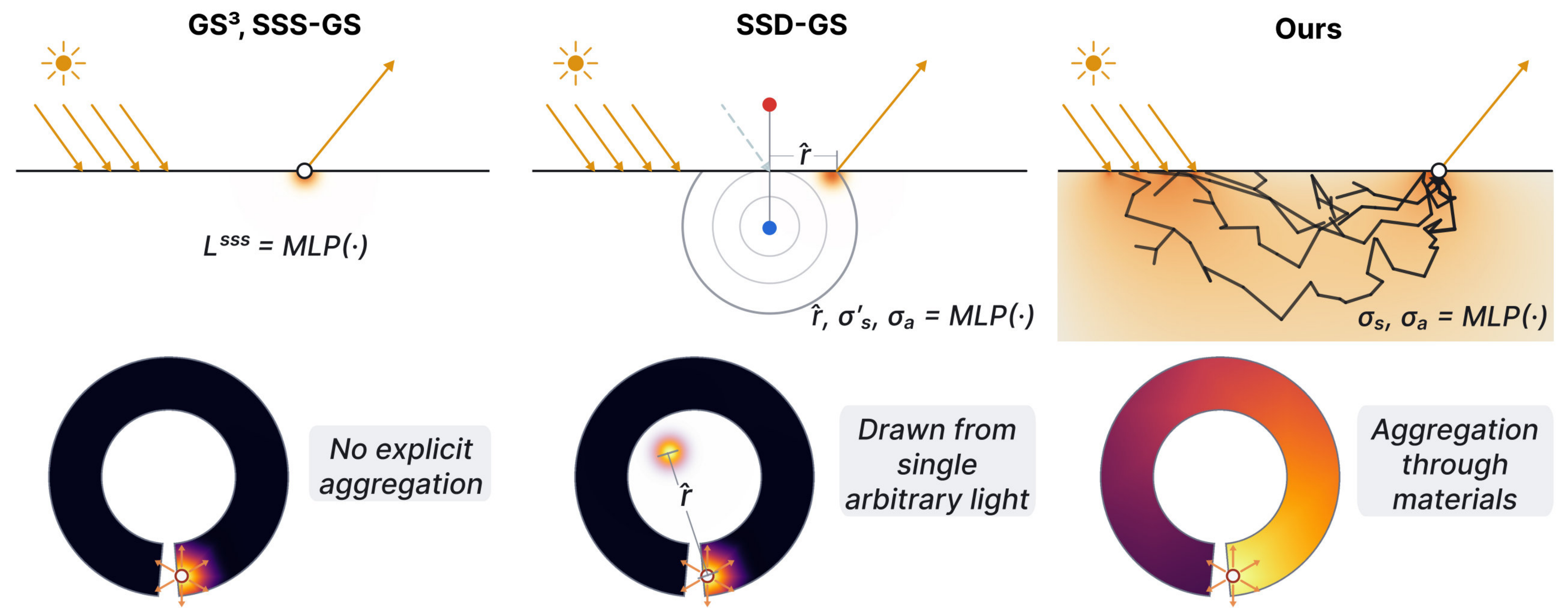}
    \caption{\textbf{Per-primitive kernels versus transport through the medium.} Top: where each
    method computes its subsurface response. Bottom: a C-shaped medium lit at the opening. The dipole
    depends on Euclidean distance, so its support crosses the gap, while our solve aggregates the subsurface
    response through the learned medium.}
    \label{fig:kernel}
\end{figure}

The Gaussian representation already provides the spatial support on which subsurface scattering can be computed. We turn that into an occupancy field on a coarse grid and solve on the occupied cells. Each Gaussian then reads the solved field back at its own
position. The grid carries no parameters of its own, so the domain is the Gaussians themselves rather than an extracted mesh. VolS-GS learns scattering and absorption properties over this domain and uses them in a finite-volume transport solve. The solve produces the multiply scattered field, while a separate single-scattering term adds the once-scattered light integrated along the view chord, which the angle-averaged fluence cannot represent. An adjoint formulation makes the solver differentiable without unrolling its iterations, so the medium is learned end to end from images alone. Shadow modeling shares transmittance cues with the transport solve, and a regularizer discourages the specular branch from absorbing transported light.

In summary, we present VolS-GS, a relightable Gaussian splatting method, with the following contributions. (1) A differentiable volumetric subsurface scattering defined on the Gaussian scene's occupancy, allowing light to propagate across primitives. (2) A shadow term that reads the same transport field and a regularizer on the specular branch, which together keep the learned shadow and the learned specular from trading against each other.
A full discussion of related work is deferred to Appendix~\ref{sec:related}.

\section{Method}
\label{sec:method}

\begin{figure}[!t]
    \centering
    \includegraphics[width=\linewidth]{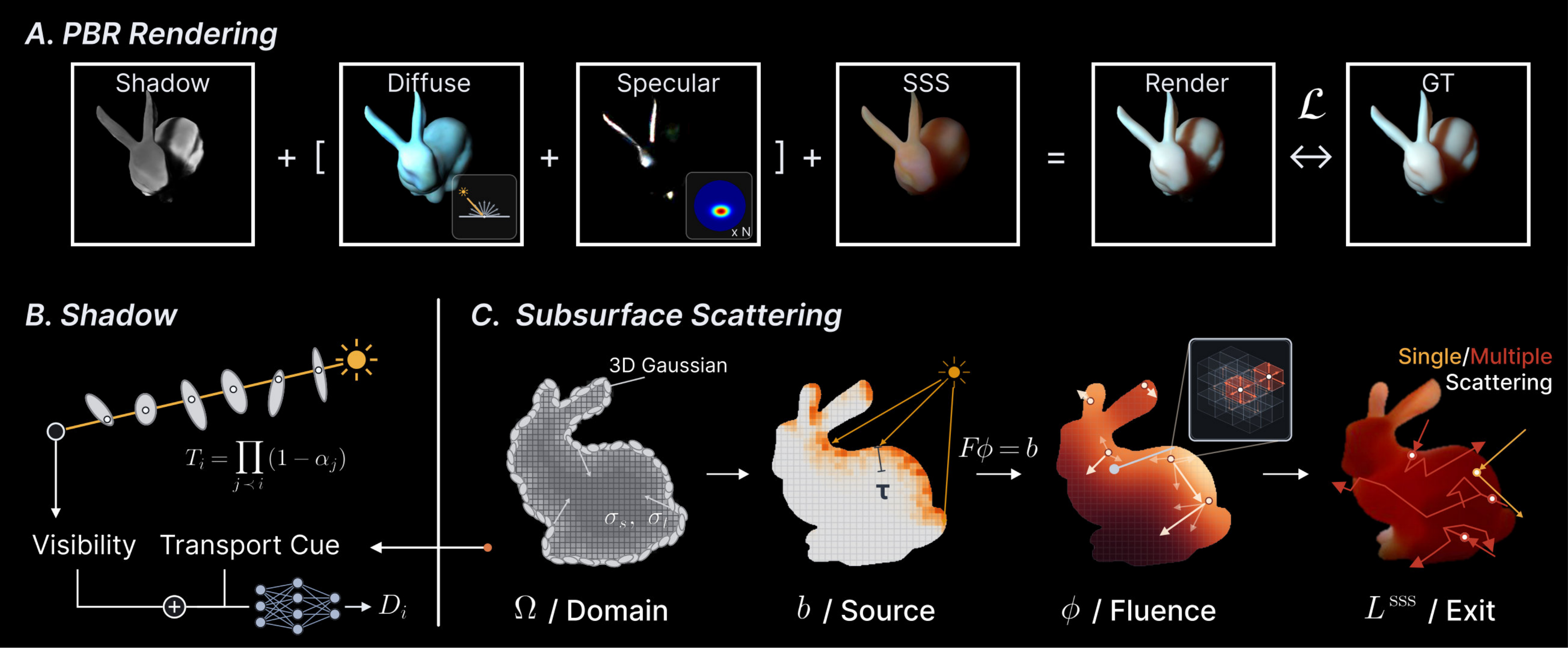}
    \caption{\textbf{Overview of VolS-GS.} \textbf{(A)} A render is the sum of a shadowed
    surface response and a subsurface term, compared against the captured frame.
    \textbf{(B)} The shadow is predicted from two cues gathered along the ray to the light,
    the fraction the Gaussians block and the fraction the medium transmits.
    \textbf{(C)} The subsurface term is propagated rather than predicted per primitive. The Gaussians define a
    domain $\Omega$, the light entering it forms a source $b$, one linear solve $F\phi=b$
    spreads that source through the medium, and each Gaussian reads the result back as
    outgoing radiance, so light can enter in one place and leave in another.}
    \label{fig:method}
\end{figure}

We use the Gaussian cloud itself as the spatial support of a transport computation, solving for one unknown field, the fluence, on the cells the Gaussians occupy. Its source is the light that has scattered once inside the medium, and its solution is read back at each primitive as outgoing radiance (Figure~\ref{fig:method}).

\subsection{Preliminaries}
\label{sec:prelim}

3D Gaussian splatting~\citep{kerbl20233dgs} represents a scene as a set of anisotropic Gaussians $\{g_i\}_{i=1}^{N}$, each with position $\mathbf{x}_i$, rotation and scale $(\mathbf{q}_i, \mathbf{s}_i)$ defining its covariance, opacity $o_i$, and a color $c_i$. Rendering projects the Gaussians to screen space and alpha-composites them front to back,
\begin{equation}
C(\mathbf{p}) \;=\; \sum_{i \in \mathcal{S}(\mathbf{p})} c_i\, \alpha_i \prod_{j<i} (1-\alpha_j),
\label{eq:composite}
\end{equation}
where $\alpha_i$ combines $o_i$ with the projected footprint and $\mathcal{S}(\mathbf{p})$ is the depth-sorted set of Gaussians covering pixel $\mathbf{p}$. Our method changes only the computation of the per-Gaussian color $c_i$, leaving the rasterizer untouched.

\paragraph{Diffusion-based light transport.}
Where scattering dominates absorption, light loses its directionality after a few events and multiple scattering
is well approximated by diffusion of the fluence $\phi$, the direction-averaged light
density --- the zeroth angular moment of the radiative transfer equation, kept to first order
(P1)~\citep{ishimaru1978wave,stam1995multiple,jensen2001bssrdf}. The approximation is weakest at the
boundary and for the first, still-directional event, which we therefore keep separate
(Section~\ref{sec:sss2}).

\subsection{Relighting formulation}
\label{sec:formulation}

Under a point light with incident direction $\boldsymbol{\omega}_l$, view direction $\boldsymbol{\omega}_o$, and distance $d_i$ to Gaussian $g_i$, the outgoing radiance of the primitive is the sum of shadowed direct reflection and subsurface scattering,
\begin{equation}
c_i \;=\; L^{\mathrm{d}}_i + L^{\mathrm{s}}_i + L^{\mathrm{sss}}_i,
\qquad
L^{\mathrm{d}}_i \;=\; \frac{D_i}{d_i^{2}}\; \mathbf{a}_i\,\langle \mathbf{n}_i, \boldsymbol{\omega}_l\rangle,
\qquad
L^{\mathrm{s}}_i \;=\; \frac{D_i}{d_i^{2}}\; f^{\mathrm{s}}_i(\boldsymbol{\omega}_l, \boldsymbol{\omega}_o),
\label{eq:skeleton}
\end{equation}
where $1/d_i^{2}$ is the point light's inverse-square falloff and the learned decay $D_i \in [0,1]$ is the additional attenuation from shadowing. The diffuse term is Lambertian with per-Gaussian albedo $\mathbf{a}_i$ and a learned shading normal $\mathbf{n}_i$, and the specular lobe $f^{\mathrm{s}}_i$ is a mixture of anisotropic spherical Gaussians~\citep{xu2013asg} whose amplitudes are conditioned on the light direction, an expressive basis for stretched and anisotropic highlights~\citep{bi2024gs3} in place of an explicit microfacet BRDF~\citep{walter2007microfacet}. The decay multiplies the two direct terms only. Light that leaves through the subsurface channel has already been attenuated inside the solve, by the march that builds its source and by absorption along the path, so applying $D_i$ to it as well would attenuate the same travel twice; Section~\ref{sec:ablation} measures the alternative. $D_i$ is predicted from a geometric visibility $V_i$ splatted from the light's viewpoint, together with transport cues (Section~\ref{sec:erase}), and the colored Gaussians are composited by Eq.~\ref{eq:composite}. Section~\ref{sec:sss2} defines $L^{\mathrm{sss}}_i$.

\subsection{Gaussian-Supported Transport Domain}
\label{sec:sss1}

\paragraph{Transport Domain.}

We lay a $64^{3}$ grid over an axis-aligned box around the cloud and rasterize the spatial support
of the Gaussians into a fractional occupancy field $O(\mathbf{x}) \in [0,1]$ on it, taking the
occupied cells as the transport domain $\Omega$. The
occupancy is kept fractional inside the domain rather than rounded to a binary mask, because the
boundary term of the finite-volume operator introduced below scales with interface area, which a binary mask
overcounts~\citep{johansen1998cartesian}. A low occupancy cutoff decides which cells enter $\Omega$
(Appendix~\ref{app:impl}). Gaussians fit the surface, so rasterizing them leaves a hollow shell, and a shell has no interior for
light to cross. The interior is therefore filled by a
generalized winding number over a subsample of them~\citep{barill2018winding},
which separates inside from outside without a watertight mesh. The per-Gaussian
coefficients are then splatted onto the same grid with a kernel spanning several
cells. Each interior cell takes the coefficients of the nearest
surface cell, propagated inward on a coarser grid, the construction used for extension velocities in level-set methods~\citep{adalsteinsson1999extension} (Appendix~\ref{app:impl}).

Gradients flow back through this splatting, and the domain itself adds no learnable parameters: it is
read off the Gaussians at every rebuild. Following the classical two-part decomposition, the multiple-scattering
solve is driven by the singly scattered light~\citep{stam1995multiple,arbree2011heterogeneous}. For each active
cell $\mathbf{c}$ we march toward the light through $O$ and accumulate $\tau(\mathbf{c}) = \int O\,ds$, the path
length weighted by occupancy rather than the geometric distance, so that empty space between
disconnected parts of the object contributes nothing to the extinction, and set
\begin{equation}
E(\mathbf{c}) = \frac{e^{-\sigma_t(\mathbf{c})\, \tau(\mathbf{c})}}{d(\mathbf{c})^{2}}, \qquad
b(\mathbf{c}) = \sigma_s(\mathbf{c})\, E(\mathbf{c})\, O(\mathbf{c}).
\label{eq:source}
\end{equation}

Here $E$ is the light that survives the march to the cell, and $b$ is the density of first-scattering events there, which acts as the source term of the solve.

\paragraph{Medium Coefficients.}

A small MLP maps each Gaussian's encoded position, its per-primitive latent, and the encoded light and view
directions to per-channel scattering and absorption coefficients,
\begin{equation}
\big(\boldsymbol{\sigma}_s(g_i),\, \boldsymbol{\sigma}_a(g_i)\big) \;=\; \mathrm{MLP}_{\mathrm{mat}}\big(\gamma(\mathbf{x}_i),\, \mathbf{z}_i,\, \gamma(\boldsymbol{\omega}_l),\, \gamma(\boldsymbol{\omega}_o)\big) \;\in\; \mathbb{R}^3\times\mathbb{R}^3,
\label{eq:medium}
\end{equation}
where $g_i$ is the $i$-th Gaussian, $\gamma$ a frequency encoding and $\mathbf{z}_i$ its per-Gaussian latent
(Appendix~\ref{app:impl}). Extinction is $\sigma_t = \sigma_s + \sigma_a$ and the diffusion coefficient it
implies is $\kappa = 1/(3\sigma_t)$, both per channel. We refer to $\sigma_s$, $\sigma_a$ and $\sigma_t$ collectively as the medium coefficients.

\subsection{Volumetric subsurface scattering}
\label{sec:sss2}

\paragraph{Source and Diffusion Solve.}
We solve for the fluence $\phi$, one scalar per cell and per color channel on $\Omega$, which obeys
\begin{equation}
-\nabla\!\cdot\!\big(\kappa(\mathbf{x})\, \nabla\phi\big) + \sigma_a(\mathbf{x})\,\phi = b(\mathbf{x})
\quad \text{on } \Omega, \qquad
\kappa\,\partial_n \phi + \tfrac{1}{2A(\eta)}\,\phi = 0 \quad \text{on } \partial\Omega,
\label{eq:pde}
\end{equation}
The second equation is a Robin condition stating that a fraction of the light reaching the
surface leaves and the rest is internally reflected, where $A(\eta)$ is the classical internal-reflection factor, equal to one at $\eta = 1$~\citep{jensen2001bssrdf,arbree2011heterogeneous}. Integrating Eq.~\ref{eq:pde} over one cell and splitting its faces into those shared with another
active cell and those that meet air gives the standard finite-volume
stencil~\citep{patankar1980numerical}, per cell,
\begin{equation}
\underbrace{\frac{1}{h^{2}}\!\!\!\sum_{d\in\mathcal{N}(c)}\!\!\!\kappa_{cd}\,\alpha_{cd}\,(\phi_c-\phi_d)}_{D}
\;+\;
\underbrace{\sigma_a(c)\,O(c)\,\phi_c}_{M}
\;+\;
\underbrace{\frac{|\nabla O|_c}{2A(\eta)\,h}\,\phi_c}_{S}
\;=\; b(c),
\label{eq:fv}
\end{equation}
with $h$ the cell size, $\phi_c$ the fluence in cell $c$, $\mathcal{N}(c)$ its six neighbors,
$\kappa_{cd} = 2\kappa_c\kappa_d/(\kappa_c+\kappa_d)$ the finite-volume face conductance of two half
cells and $\alpha_{cd} = \min(O_c, O_d)$ the open fraction of that face. The factor $|\nabla O|_c$ in the third block is the interface area the cell presents to the outside, which is why a fractional occupancy matters: a binary mask puts the whole face area on every boundary cell. $D$ transports fluence between neighboring cells and is driven by the difference $\phi_c - \phi_d$, $M$
absorbs it, and $S$ lets it escape through the faces that meet air. Stacking the rows gives $F\phi = b$ with $F = D + M + S$, where $D$ discretizes the divergence term of Eq.~\ref{eq:pde}, $M$ its absorption term, and $S$ its Robin boundary condition.

We solve this system with preconditioned conjugate gradients rather than forming $F^{-1}$, and we
obtain gradients without storing the iterations. The equation is linear in the source, so several lights superpose given the medium, which is how
Appendix~\ref{app:envmap} relights under an environment map without retraining. Differentiating $F\phi = b$ gives $F\,\partial\phi = \partial b - (\partial F)\,\phi$, so the
sensitivity of $\phi$ is itself a solve against the same operator. Because $F$ is symmetric no
transpose is needed, and solving $F\lambda = \partial\mathcal{L}/\partial\phi$ once yields the
gradients with respect to $b$ and to the diagonal of $M + S$
(Propositions~\ref{prop:spd} and~\ref{prop:adjoint}). The backward pass therefore costs one extra
solve and its memory does not grow with the iteration count. Differentiating a transport solve
through its adjoint rather than through its iterations is the standard treatment in differentiable
rendering~\citep{nimierdavid2019mitsuba2,nimierdavid2020radiative,vicini2021pathreplay}. Here the operator
is assembled on the Gaussians' own occupancy. The face conductances are held at
stop-gradient, so the medium is learned through the terms it enters directly.

\paragraph{Exit Radiance.}
Multiply scattered radiance leaves the boundary as
\begin{equation}
L_{\mathrm{ms}}(g_i) = \frac{1 + 1/A(\eta)}{4\pi}\; F_t(\cos\theta_o)\; \phi\big(\mathbf{x}_i - \delta\,\mathbf{n}_i\big),
\label{eq:exit}
\end{equation}
with $F_t$ the Fresnel transmittance and the fluence sampled one cell inward ($\delta = h$), since the splats sit on the
voxelized boundary where interpolation would mix interior values with the zeros outside. We set the exit index to $\eta = 1$ on every scene, so $A(\eta)$ and $F_t$ reduce to identity and the boundary scale is absorbed by the calibration below.

Diffusion is angle-averaged and
does not represent the once-scattered light near the surface, which the classical decomposition
adds separately~\citep{hanrahan1993reflection,jensen2001bssrdf}. Along the view ray through $g_i$, in material
depth $u$,
\begin{equation}
L_{1}(g_i) = \frac{1}{4\pi} \int_{0}^{U_i} Q\big(\mathbf{x}(u)\big)\; e^{-\sigma_t u}\, du,
\qquad Q = \sigma_s\,E,
\label{eq:single}
\end{equation}
where $U_i$ is the material chord, the part of the view ray that lies inside the medium, marched on the
same occupancy field as $\tau$ so that the two path lengths are measured consistently. $Q$ is the same
source the solve is driven from, so the term costs one $32$-sample quadrature and no new state. The channel a Gaussian finally emits is
\begin{equation}
L^{\mathrm{sss}}_i \;=\; \mathbf{k}^{\mathrm{sss}}_i \odot g_{\mathrm{cal}} \left[ L_1(g_i) + L_{\mathrm{ms}}(g_i) \right],
\label{eq:sss}
\end{equation}
where $\mathbf{k}^{\mathrm{sss}}_i$ is a learned per-primitive, per-channel amplitude and $g_{\mathrm{cal}}$ one
learned scalar per scene, both calibrating solver units to rasterizer radiometry. Where the medium is
thin the solve's support collapses onto the primitive's own neighborhood and the channel carries
almost nothing, so the model falls back to local shading rather than colliding with it
(Remark~\ref{rem:degen}).

The coefficients are learned from images and conditioned on the light and view directions, so they are not
optical constants of the material.

\subsection{Shadow and Specular Disambiguation}
\label{sec:erase}

Both the specular basis and the shadow decay are learned, and a highlight that a shadow cancels leaves no trace in the rendered image, so the two can trade against each other freely. The specular component can therefore explain smooth transported illumination whenever the shadow term suppresses the direct component, which is exactly the appearance the solve of Section~\ref{sec:sss2} should own. 

Shadowing has two stages. We first estimate geometric visibility from the light by rendering the Gaussian scene with the rasterizer of Eq.~\ref{eq:composite}.

\begin{equation}
V_i \;=\; \frac{\sum_{\mathbf{p}} G_i(\mathbf{p})\, T_i(\mathbf{p})}{\sum_{\mathbf{p}} G_i(\mathbf{p})},
\qquad
T_i(\mathbf{p}) \;=\; \prod_{j \prec i} \big(1-\alpha_j(\mathbf{p})\big),
\label{eq:vis}
\end{equation}
where $\mathbf{p}$ ranges over the light-view pixels in the footprint $G_i$ of $g_i$ and $j \prec i$ are the Gaussians closer to the light. $V_i \in [0,1]$ is a transmittance, one when nothing lies between $g_i$ and the light and zero when it is fully occluded. A small network then predicts the attenuation $D_i \in [0,1]$ from this visibility together with transmittance cues taken from the transport field,
\begin{equation}
D_i \;=\; \sigma\Big(g_\theta\big(\gamma(\mathbf{x}_i),\, \gamma(\boldsymbol{\omega}_l),\, \mathbf{z}_i,\, \mathbf{h}_i\big)\Big),
\qquad
\mathbf{h}_i \;=\; \big[\, V_i,\;\; e^{-\bar\sigma_{t,i}\,\tau_i},\;\; e^{-\bar\sigma_{t,i}\,U_i},\;\; r_i,\;\; \chi_i \,\big],
\label{eq:decay}
\end{equation}
with $\mathbf{z}_i$ the same per-Gaussian latent as in Eq.~\ref{eq:medium} and the hint vector $\mathbf{h}_i$ gradient-stopped, for the same reason as the face conductances: the shadow branch reads the transport field, and letting it push gradients back would let it reshape the medium to make shadows cheaper, fitting $\sigma_t$ to the shadow residual rather than to transport. $\tau_i$ and $U_i$ are the material path lengths toward the light and along the view chord, marched on the same occupancy grid as the transport, and $\bar\sigma_{t,i}$ is the primitive's mean extinction. 

Both the shadow decay and the transport solve therefore depend on the same extinction field \(\sigma_t\). The last two entries compare the channel with the undecayed direct terms, $r_i = \log\big(1 + \bar{L}^{\mathrm{sss}}_i / \overline{L^{\mathrm{d}}_i + L^{\mathrm{s}}_i}\big)$ and $\chi_i = L^{\mathrm{sss}}_{i,R} / \bar{L}^{\mathrm{sss}}_i$, bars denoting the mean over color channels. Together they indicate the share of the local light that is transported rather than reflected.

We break the ambiguity with a shadow-aware prior on the rendered images,
\begin{equation}
\mathcal{L}_{\text{leak}} \;=\; \frac{1}{|\mathcal{P}|}\sum_{\mathbf{p}\in\mathcal{P}} \tilde{S}(\mathbf{p})\,\big(1 - \mathrm{sg}[\tilde{D}(\mathbf{p})]\big),
\label{eq:erase}
\end{equation}
where $\mathcal{P}$ is the set of pixels the object covers, $\tilde{S}$ and $\tilde{D}$ are the per-Gaussian specular radiance and decay after alpha compositing along the ray, and $\mathrm{sg}[\cdot]$ stops the gradient, so the penalty acts on the specular term only and cannot be avoided by brightening shadows.

\subsection{Training}
\label{sec:training}

All components are trained jointly from scratch. The objective is
\begin{equation}
\mathcal{L}_{\text{total}} \;=\; \mathcal{L}_{\text{photo}}
\;+\; \lambda_{\alpha}\,\mathcal{L}_{\alpha}
\;+\; \lambda_{\text{leak}}\,\mathcal{L}_{\text{leak}}
\;+\; \lambda_{\text{med}}\,\mathcal{L}_{\text{med}},
\label{eq:loss}
\end{equation}
where $\mathcal{L}_{\text{photo}}$ combines $\ell_1$ and D-SSIM as in 3DGS~\citep{kerbl20233dgs}, $\lambda_{\text{med}} = 0.05$, $\mathcal{L}_{\alpha}$ is an $\ell_1$ loss between the accumulated opacity and the capture's alpha channel ($\lambda_{\alpha}{=}0.2$), $\mathcal{L}_{\text{leak}}$ is the regularizer of Eq.~\ref{eq:erase} ($\lambda_{\text{leak}}{=}0.05$), and $\mathcal{L}_{\text{med}}$ regularizes the learned medium by smoothing \((\log\sigma_t,\text{albedo})\) across neighboring Gaussians and constraining Gaussian scale relative to the transport grid. The scale term is what keeps the two representations compatible. A splat carries one fluence value, read at one point, and one chord length, so a splat spanning many cells sits across regions with different $\phi$ and is described by none of them. We utilize a progressive training strategy for modeling SSS after constructing a fundamental geometry and appearance following SSD-GS~\citep{zheng2026ssdgs}. The subsurface channel is switched on after a $9$k-iteration warm-up, during which the solve is skipped and only the surface terms and the geometry are fit. After it, every step runs one forward and one adjoint solve, and the transport domain is rebuilt whenever densification changes the Gaussian count and in any case every $300$ iterations.

\section{Experiments}
\label{sec:exp}

\subsection{Setup}
\label{sec:setup}

\paragraph{Datasets.}
We evaluate on three OLAT datasets covering subsurface-dominant, opaque, and anisotropic material regimes. The \textbf{SSS-GS} benchmark~\citep{dihlmann2024sssgs}, built for subsurface scattering, provides five synthetic translucent objects and three real light-stage captures. The \textbf{NRHints} set~\citep{zeng2023nrhints} contributes seven real captured objects, most of which are not strongly translucent. The \textbf{GS$^3$} synthetic set~\citep{bi2024gs3} adds six scenes spanning metal, fur, and mixed materials.

\paragraph{Baselines.}
The methods that appear in qualitative comparisons are retrained from public implementations under matched resolution, splits, and training budgets. More details are listed in Appendix~\ref{app:impl}. These are GS$^3$~\citep{bi2024gs3}, a general relighting system, SSD-GS~\citep{zheng2026ssdgs}, a recent relightable Gaussian baseline with an explicit subsurface component that is the local-kernel counterpoint to our solve, RNG~\citep{fan2025rng}, and SSS-GS~\citep{dihlmann2024sssgs}, which introduces a dedicated subsurface scattering component. The per-scene tables in Appendix~\ref{app:perscene} also list 3DGS~\citep{kerbl20233dgs} and GI-GS~\citep{chen2025gigs} with the numbers reported by \citet{zheng2026ssdgs} as additional reference points; both fall well below the retrained baselines. We use a fixed hyperparameter configuration across all scenes, following each baseline's official codebase.

\paragraph{Implementation.}
All timings are measured on one A6000, with the per-scene transfer basis precomputed once and excluded.
VolS-GS renders the five SSS-GS synthetic scenes at $64.5 \pm 4.9$ FPS. On the SSS-GS synthetic scenes, training takes under 3.5 hours and under
8\,GiB of GPU memory per scene. Per-scene frame times, primitive
counts and the rest of the implementation are in Appendix~\ref{app:impl}.

\begin{table}[!htb]\centering
\caption{\textbf{Main results.} Left: the eight subsurface scenes, per scene, every method at the $60$k protocol iteration. Right: the remaining two datasets as test-split means over their $7$ and $6$ scenes (per-scene numbers trained at $100$k in Appendix~\ref{app:perscene}). Best \colorbox{red!25}{red}, second \colorbox{orange!25}{orange}. }
\label{tab:sss}
\resizebox{\textwidth}{!}{%
\begin{tabular}{ll cccccccc | cc}
\toprule
 & Method & \textsc{Bunny} & \textsc{Dragon} & \textsc{Candle} & \textsc{Soap} & \textsc{Statue} & \textsc{Crystal} & \textsc{JoyfulYell} & \textsc{Physalis} & NRHints & GS$^3$ \\
  &        &       &        &        &      &        &         &            &          & \multicolumn{2}{c}{dataset mean} \\
\midrule
\multirow{5}{*}{PSNR$\uparrow$} & GS$^3$ & 34.7109 & 36.1753 & 38.3707 & 38.3713 & 35.3291 & \cellcolor{orange!25}35.2767 & 31.5467 & 28.9163 & 31.94 & \cellcolor{orange!25}31.11 \\
 & SSS-GS & 36.0035 & 34.7397 & 38.3432 & 36.4006 & 35.5485 & 34.6201 & 31.4973 & 28.9331 & 25.16 & 28.23 \\
 & RNG & 35.3300 & 34.6134 & 40.3420 & \cellcolor{orange!25}38.4962 & 29.2148 & 33.6849 & 30.0986 & 28.6492 & 30.74 & 28.64 \\
 & SSD-GS & \cellcolor{orange!25}38.9147 & \cellcolor{orange!25}37.7832 & \cellcolor{orange!25}41.9214 & 38.2959 & \cellcolor{orange!25}35.9034 & 35.2096 & \cellcolor{orange!25}31.8938 & \cellcolor{orange!25}29.5474 & \cellcolor{orange!25}32.05 & 31.01 \\
\cmidrule(lr){2-12}
 & \textbf{VolS-GS (Ours)} & \cellcolor{red!25}40.1453 & \cellcolor{red!25}38.0579 & \cellcolor{red!25}43.6366 & \cellcolor{red!25}41.0098 & \cellcolor{red!25}38.3515 & \cellcolor{red!25}35.6278 & \cellcolor{red!25}32.1856 & \cellcolor{red!25}30.2210 & \cellcolor{red!25}32.37 & \cellcolor{red!25}31.59 \\
\midrule
\multirow{5}{*}{SSIM$\uparrow$} & GS$^3$ & 0.9811 & 0.9769 & 0.9882 & 0.9852 & 0.9815 & \cellcolor{orange!25}0.9813 & 0.9665 & 0.9535 & 0.9500 & \cellcolor{orange!25}0.9676 \\
 & SSS-GS & 0.9797 & 0.9596 & 0.9864 & 0.9762 & 0.9765 & 0.9776 & 0.9603 & 0.9461 & 0.8499 & 0.9452 \\
 & RNG & 0.9859 & 0.9715 & 0.9940 & \cellcolor{orange!25}0.9896 & 0.9647 & 0.9743 & 0.9582 & 0.9483 & 0.9272 & 0.9560 \\
 & SSD-GS & \cellcolor{orange!25}0.9907 & \cellcolor{orange!25}0.9828 & \cellcolor{orange!25}0.9942 & 0.9884 & \cellcolor{orange!25}0.9845 & 0.9808 & \cellcolor{orange!25}0.9676 & \cellcolor{orange!25}0.9585 & \cellcolor{orange!25}0.9504 & 0.9672 \\
\cmidrule(lr){2-12}
 & \textbf{VolS-GS (Ours)} & \cellcolor{red!25}0.9920 & \cellcolor{red!25}0.9835 & \cellcolor{red!25}0.9960 & \cellcolor{red!25}0.9935 & \cellcolor{red!25}0.9899 & \cellcolor{red!25}0.9820 & \cellcolor{red!25}0.9690 & \cellcolor{red!25}0.9590 & \cellcolor{red!25}0.9513 & \cellcolor{red!25}0.9702 \\
\midrule
\multirow{5}{*}{LPIPS$\downarrow$} & GS$^3$ & 0.0239 & 0.0251 & 0.0167 & 0.0243 & 0.0207 & \cellcolor{orange!25}0.0196 & 0.0419 & 0.0479 & \cellcolor{orange!25}0.0795 & 0.0364 \\
 & SSS-GS & 0.0323 & 0.0539 & 0.0222 & 0.0286 & 0.0368 & 0.0231 & 0.0468 & 0.0542 & 0.1589 & 0.0649 \\
 & RNG & 0.0177 & 0.0312 & 0.0080 & \cellcolor{orange!25}0.0116 & 0.0371 & 0.0291 & 0.0492 & 0.0551 & 0.0998 & 0.0465 \\
 & SSD-GS & \cellcolor{orange!25}0.0129 & \cellcolor{orange!25}0.0199 & \cellcolor{orange!25}0.0077 & 0.0186 & \cellcolor{orange!25}0.0185 & 0.0205 & \cellcolor{orange!25}0.0411 & \cellcolor{orange!25}0.0470 & 0.0802 & \cellcolor{orange!25}0.0362 \\
\cmidrule(lr){2-12}
 & \textbf{VolS-GS (Ours)} & \cellcolor{red!25}0.0112 & \cellcolor{red!25}0.0188 & \cellcolor{red!25}0.0060 & \cellcolor{red!25}0.0077 & \cellcolor{red!25}0.0130 & \cellcolor{red!25}0.0189 & \cellcolor{red!25}0.0385 & \cellcolor{red!25}0.0444 & \cellcolor{red!25}0.0779 & \cellcolor{red!25}0.0330 \\
\bottomrule
\end{tabular}}
\end{table}

\begin{figure}[!htb]
    \centering
    \includegraphics[width=0.86\linewidth]{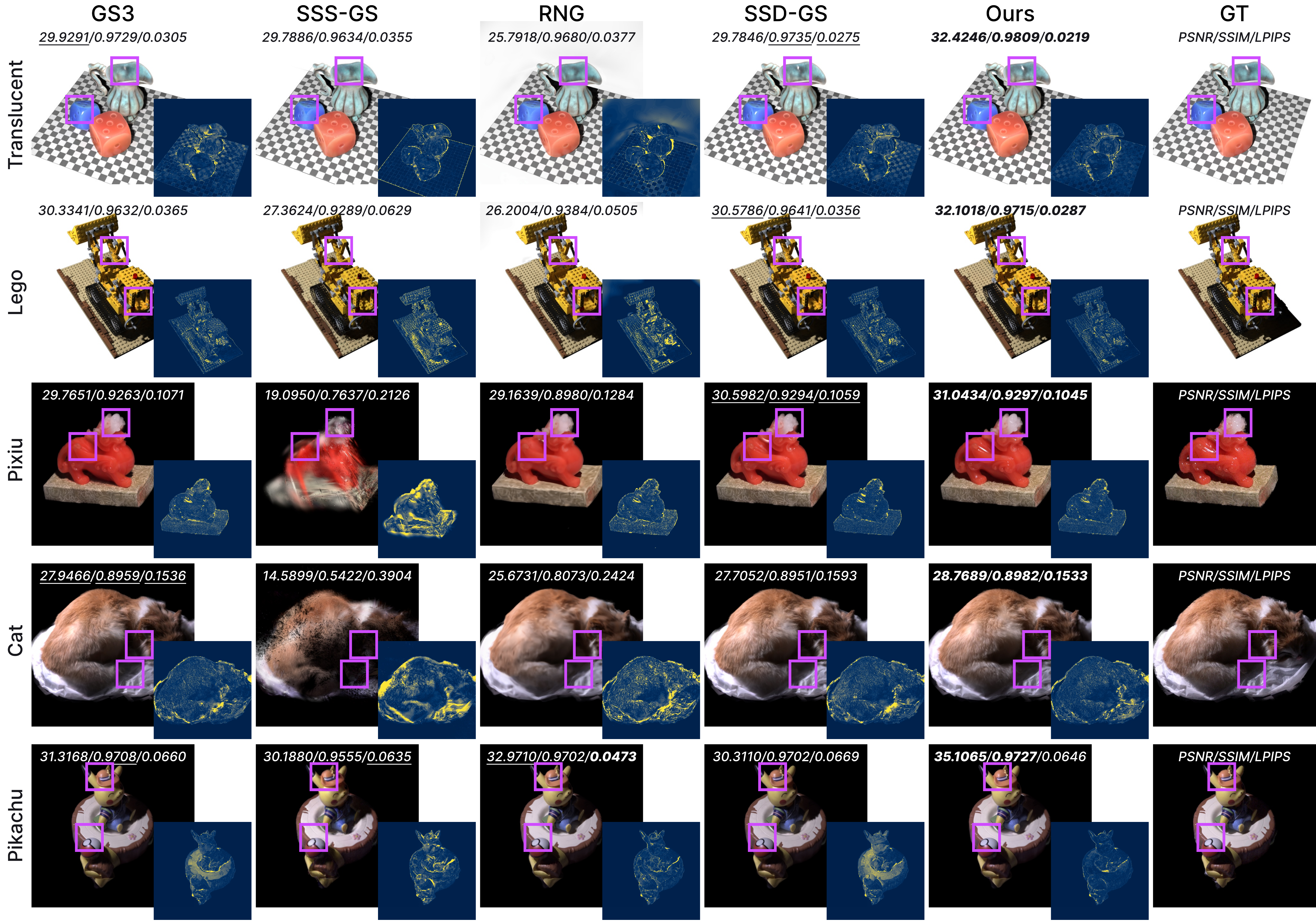}
    \caption{Qualitative relighting comparison under novel lighting across different scenes. \textsc{Translucent} and \textsc{Lego} are synthetic scenes from GS$^3$, while \textsc{Pixiu}, \textsc{Cat}, and \textsc{Pikachu} are real captures from NRHints. Per-frame PSNR/SSIM/LPIPS are printed on each panel, \textbf{best} in bold and \underline{second} underlined within a row. The inset is the error map $\lvert\text{render}-\text{GT}\rvert$ averaged over color channels and clipped at $0.15$, on one ramp shared by every panel.}
    \label{fig:qual}
\end{figure}

\subsection{Main results}
\label{sec:sssbench}

\paragraph{Quantitative Results.} Table~\ref{tab:sss} shows that VolS-GS achieves
the best average performance across all three datasets. All per-scene metrics are provided in Appendix~\ref{app:perscene}. The improvement is most consistent on the SSS-GS benchmark, where subsurface scattering is the dominant effect. The gains also carry over to NRHints and GS$^3$, which contain less translucent and more diverse materials.

\paragraph{Qualitative Results.}
Figure~\ref{fig:qual} shows relighting results across translucent and opaque objects. GS$^3$ and SSS-GS reproduce the overall appearance but their predictions become less accurate around shadows and fine appearance details. SSS-GS in particular does not optimize stably on four of the seven NRHints scenes, and we report and show those rows as measured. RNG also captures the overall illumination but tends to lose high-frequency details, particularly on \textsc{Lego} and \textsc{Cat}. SSD-GS produces competitive results but still struggles with intricate shadow patterns and complex subsurface effects.

\begin{figure}[!htb]
    \centering
    \includegraphics[width=0.80\linewidth]{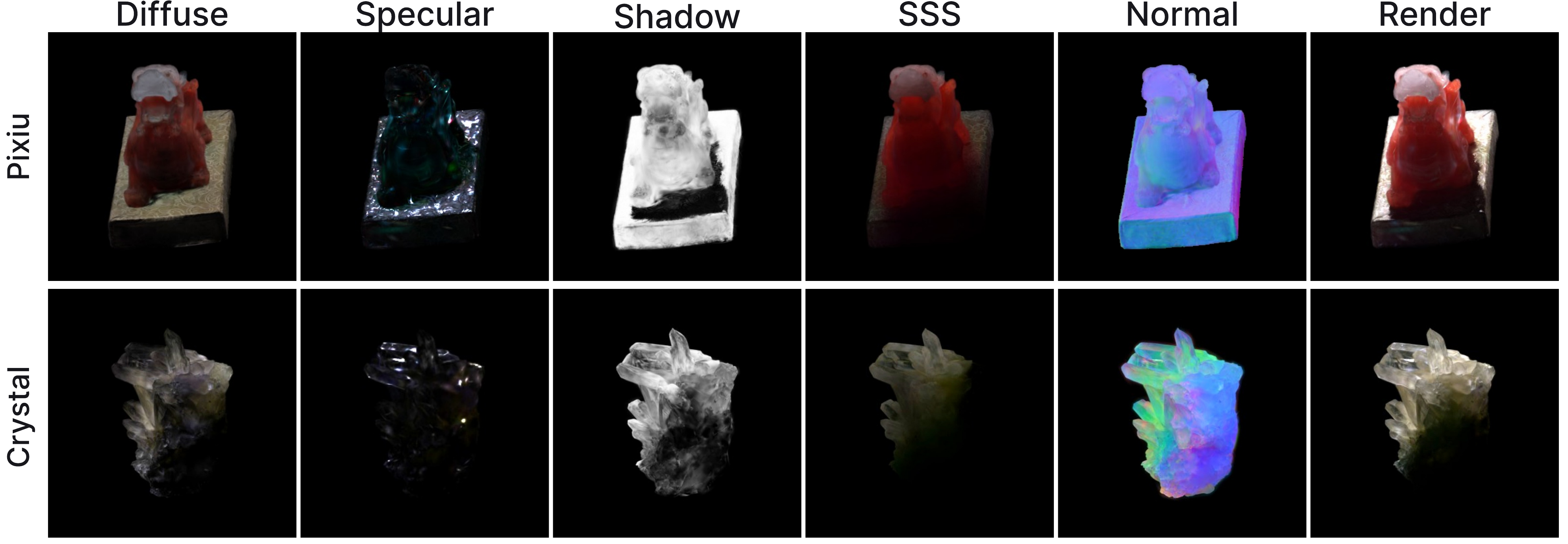}
    \caption{\textbf{Material decomposition.} Our own channels on two real captures,
    \textsc{Pixiu} (top) and \textsc{Crystal} (bottom), one row per scene. The specular column is amplified only where it is already bright ($\times2.5$ above its
    own 95th percentile), identically in both rows.}
    \label{fig:decomp}
\end{figure}

\paragraph{Material Decomposition.}
\label{sec:decomp}
Figure~\ref{fig:decomp} shows the learned decomposition into diffuse, specular, shadow, SSS, and normal components. The SSS channel captures the broad glow visible in backlit regions while the specular and shadow branches remain more localized. More detailed investigations across SSD-GS and SSS-GS are in Appendix~\ref{app:compare}.

\subsection{Ablations}
\label{sec:ablation}

\begin{table}[!htb]
\centering
\caption{\textbf{Ablations} on Bunny and Statue where SSS and shadow are prevalent. Each row retrains the final
configuration with one component changed.}
\label{tab:ablation}
\resizebox{0.85\textwidth}{!}{%
\begin{tabular}{l ccc ccc}
\toprule
 & \multicolumn{3}{c}{\textsc{Bunny}} & \multicolumn{3}{c}{\textsc{Statue}} \\
\cmidrule(lr){2-4}\cmidrule(lr){5-7}
 & PSNR$\uparrow$ & SSIM$\uparrow$ & LPIPS$\downarrow$ & PSNR$\uparrow$ & SSIM$\uparrow$ & LPIPS$\downarrow$ \\
\midrule
\textbf{Full model} & \textbf{40.15} & \textbf{0.9920} & \textbf{0.0112} & \textbf{38.35} & \textbf{0.9899} & \textbf{0.0130} \\
$-$ Volumetric SSS & 34.67 & 0.9812 & 0.0283 & 36.19 & 0.9852 & 0.0173 \\
$-$ Transport cues on shadow & 39.17 & 0.9910 & 0.0126 & 35.09 & 0.9818 & 0.0211 \\
Shadow term multiplied onto SSS & 39.02 & 0.9901 & 0.0132 & 37.66 & 0.9885 & 0.0144 \\
$-$ Shadow term & 36.30 & 0.9859 & 0.0179 & 34.41 & 0.9794 & 0.0230 \\
\bottomrule
\end{tabular}}
\end{table}

\begin{figure}[!t]
    \centering
    \includegraphics[width=0.92\linewidth]{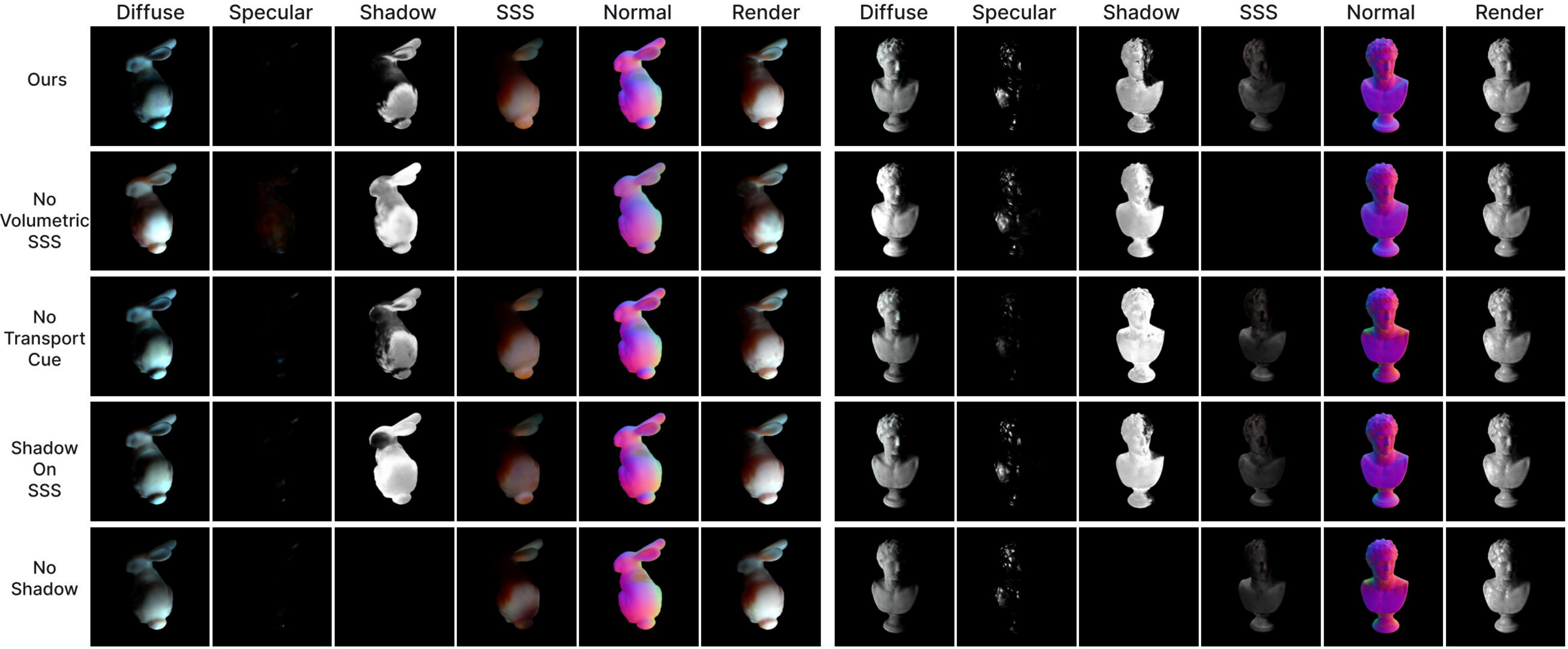}
    \caption{\textbf{Ablation study.} Material decomposition on \textsc{Bunny} (left) and \textsc{Statue} (right) from the SSS-GS dataset. Rows are the full model and the four single-change variants of Table~\ref{tab:ablation}.}
    \label{fig:abl}
\end{figure}

We ablate the main components of VolS-GS to understand their individual roles. We consider four variants: (1) $-$Volumetric SSS, which removes the volumetric scattering component; (2) $-$transport cues on shadow, which prevents the shadow branch from using transport-derived cues; (3) shadow term multiplied onto SSS, which applies the learned shadow to the SSS component as well; and (4) $-$shadow term, which removes the learned shadow component.

Figure~\ref{fig:abl} and Table~\ref{tab:ablation} give the visual and the metric comparison of these variants. Removing the volumetric SSS channel (1) changes the rendered appearance of \textsc{Bunny} and \textsc{Statue} wherever transported light contributes. Removing the transport cues (2) changes the shadow decomposition, most visibly on \textsc{Statue}, where the predicted shadow no longer follows the same attenuation pattern. Multiplying the shadow onto the SSS channel (3) changes the render only slightly but rewrites the shadow channel, degrading its interpretability, and the cost on held-out lights is visible in Table~\ref{tab:ablation}. Removing the shadow term (4) flattens the shading. The specular leakage regularizer contributes to cleaner decomposition, whose investigation is reported in Appendix~\ref{app:specreg}.

\section{Conclusion}
\label{sec:conclusion}

We presented VolS-GS, a relightable Gaussian representation that augments local shading with a learned volumetric subsurface scattering, enabling light propagation across the spatial support of the scene. The shadow term reads cues from the same transport and a prior keeps the specular lobe out of unlit pixels, so the components stay separable and the learned decomposition is interpretable. Across three OLAT benchmarks, VolS-GS achieves high visual quality while reproducing the light scattered inside the object, and the resulting reconstructions relight faithfully under novel lights and views.

\paragraph{Limitations and future work.}
Since our solve for volumetric subsurface scattering is based on a diffusion approximation, it may not suffice where the radiance is still directional, and it
carries no path for light exchanged between surfaces on concave regions rather than through the medium
(Figure~\ref{fig:failure}). We leave directional subsurface scattering in a discrete volume regime and unified representation for diverse non-local effects as interesting
future work.

\clearpage

\bibliography{main}
\bibliographystyle{iclr2027_conference}

\appendix
\clearpage
\section{Related work}
\label{sec:related}

\subsection{Relightable neural and Gaussian representations}
Relighting a captured object means re-rendering it under light it was never photographed under. On
neural fields~\citep{mildenhall2020nerf} one line recovers an explicit reflectance model, pairing
reflectance with visibility~\citep{srinivasan2021nerv}, fitting spherical-Gaussian
BRDFs~\citep{zhang2021physg}, separating shape from material under unknown
illumination~\citep{boss2021nerd,zhang2021nerfactor}, or reparameterising the view-dependent term for
specular surfaces~\citep{verbin2022refnerf}. Another keeps appearance implicit and conditions it on the
light, the setting we work in, learned from point-lit images~\citep{zeng2023nrhints} or carried on a
tensorial field~\citep{jin2023tensoir}.

3D Gaussian splatting~\citep{kerbl20233dgs} moved relighting onto primitives that rasterize in real time. Relightable 3D Gaussians~\citep{gao2023relightable} and
GaussianShader~\citep{jiang2024gaussianshader} give each Gaussian BRDF parameters under environment
lighting. In the one-light-at-a-time (OLAT) setting, OLAT Gaussians~\citep{kuang2024olatgaussians}
condition a network on the light and hint it from a proxy mesh, GS$^3$~\citep{bi2024gs3} adds a Lambertian
lobe with an anisotropic specular basis and a learned residual, RNG~\citep{fan2025rng} drops the explicit
model altogether, and SSD-GS~\citep{zheng2026ssdgs} contributes an MLP-refined shadow field and a
subsurface term. We keep this family's pipeline and protocol. GS$^3$ collects whatever its explicit terms leave out in a single residual, so transported light is
present in its result but cannot be read out as a subsurface component.

\subsection{Non-local light transport in Gaussian scenes}
Inverse rendering on Gaussian primitives recovers geometry, material and lighting jointly:
GS-IR~\citep{liang2024gsir} bakes occlusion into a per-Gaussian representation to make the rendering
equation tractable, and IRGS~\citep{gu2025irgs} evaluates incident radiance on the fly with differentiable
2D Gaussian ray tracing so that secondary bounces are not approximated away. Several recent Gaussian
methods explicitly model light transport between primitives. PRTGS~\citep{guo2024prtgs} precomputes the radiance transfer
of Gaussian splats so that soft shadows and indirect illumination can be evaluated in real time under low-frequency
environment lighting. Ref-Gaussian~\citep{yao2025refgaussian} adds physically based deferred shading and
Gaussian-grounded inter-reflection for glossy objects. Radiosity-GS~\citep{jiang2025radiositygs} makes a radiosity
solve differentiable on Gaussian surfels and evaluates it under the same point-light capture we use. These methods transport light between surfaces, whereas subsurface scattering transports light through the interior of the object.

\subsection{Subsurface scattering}
A BSSRDF integrates over every point at which light could have entered. Analytic kernels assume a
homogeneous, semi-infinite medium under uniform illumination, collapsing that integral into a function of
the entry-to-exit distance~\citep{hanrahan1993reflection,jensen2001bssrdf,donner2005light,deon2011quantized,habel2013photon}.
Locality is a property of the evaluation, not of these models: they also integrate the entry surface
hierarchically~\citep{jensen2002rapid}, restore the dropped directional
dependence~\citep{frisvad2014directional}, and fit profiles for the same
integral~\citep{christensen2015approximate}. 

For spatially varying media the field turned to numerical solvers, from the diffusion equation on blobs
and grids~\citep{stam1995multiple} to finite differences on a PolyGrid~\citep{wang2008modeling} and a
finite-element treatment with a diffusive-source boundary~\citep{arbree2011heterogeneous}. Ours is that
equation, discretized on the Gaussian scene's own occupancy. A separate line recovers the medium
from images instead of assuming
it~\citep{gkioulekas2013inverse,che2020towards,deng2022reconstructing}.

A newer line learns subsurface appearance instead of solving for it, relaxing the planar
boundary~\citep{vicini2019learned}, rendering object scattering functions in real
time~\citep{yu2023kiloosf}, fitting scattering-aware Gaussians~\citep{yuan2026gtsr}, or pairing explicit
shading with an implicit residual~\citep{dihlmann2024sssgs}, which also contributes our benchmark. SSD-GS~\citep{zheng2026ssdgs} evaluates a dipole at each Gaussian. In VolS-GS the
coefficients are still learned for each primitive. Where the light goes is decided by the solve.

\section{Theoretical properties}
\label{app:theory}

This appendix states the properties referenced in Sections~\ref{sec:sss1} and~\ref{sec:sss2}. Throughout, $\sigma_t = \sigma_s + \sigma_a$ and $\kappa = 1/(3\sigma_t)$ are as defined in Eq.~\ref{eq:medium}, $\alpha = \sigma_s/\sigma_t \in (0,1)$ is the single-scattering albedo, and $\mathbf{J} = -\kappa\nabla\phi$ denotes the $P_1$ flux vector, not the scalar source $Q$ of Eq.~\ref{eq:single}. Phase is isotropic. An anisotropic phase reduces at the diffusion level to $\sigma_s' = (1-g)\sigma_s$~\citep{stam1995multiple,ishimaru1978wave}, so $g$ is not separately identifiable from the reduced coefficient our network already predicts; we therefore fix $g = 0$.

\begin{proposition}[Well-posedness and stability of the discrete solve]
\label{prop:spd}
Let $F = D + M + S$ be the finite-volume operator of Eq.~\ref{eq:fv}: $D$ the divergence part with harmonic-mean face conductances $\kappa_{cd} > 0$ weighted by the open face fractions $\alpha_{cd} \ge 0$, $M = \mathrm{diag}\big(\sigma_a(c)\,O(c)\big)$ with $\sigma_a > 0$, and $S$ the Robin boundary part with nonnegative diagonal entries. Then $F$ is symmetric positive definite, the discrete solution $\phi = F^{-1} b$ exists and is unique, and $\|\phi\| \le \|b\| / \lambda_{\min}(M)$.
\end{proposition}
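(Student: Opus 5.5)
The plan is to split $F$ into a graph-Laplacian part and a nonnegative diagonal, and read every claim off the resulting quadratic form. The positivity of $M$ does most of the work.

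\textbf{Step 1: structure of $D$.} From Eq.~\ref{eq:fv}, $D$ is assembled face by face. Each interior face $\{c,d\}$ with $c,d\in\Omega$ contributes the weight $w_{cd} = \kappa_{cd}\alpha_{cd}/h^{2}$. It enters with $+w_{cd}$ in the diagonal entries $(c,c)$ and $(d,d)$, and with $-w_{cd}$ in the off-diagonal entries $(c,d)$ and $(d,c)$. Symmetry needs $w_{cd}=w_{dc}$. This holds because the harmonic mean $2\kappa_c\kappa_d/(\kappa_c+\kappa_d)$ and $\min(O_c,O_d)$ are both symmetric in their arguments. The same bookkeeping gives the Dirichlet form
\[
\phi^\top D\phi = \sum_{\{c,d\}} w_{cd}(\phi_c-\phi_d)^2 \ge 0 ,
\]
since $w_{cd}\ge 0$ (recall $\kappa_{cd}>0$ and $\alpha_{cd}\ge 0$). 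So $D$ is symmetric positive semidefinite.

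\textbf{Step 2: the diagonal parts.} $S$ is diagonal with nonnegative entries $|\nabla O|_c/(2A(\eta)h)$, so $S$ is symmetric PSD. $M$ is diagonal with entries $\sigma_a(c)O(c)$. Every active cell has $O(c)>0$, since $\Omega$ is defined by the occupancy cutoff, and $\sigma_a>0$ by hypothesis. Hence $M$ is positive definite, with $\lambda_{\min}(M)=\min_c \sigma_a(c)O(c)>0$, a minimum over the finite cell set. Adding the three parts, $F$ is symmetric and satisfies
\[
\phi^\top F\phi \ge \phi^\top M\phi \ge \lambda_{\min}(M)\|\phi\|^2 .
\]
Thus $F$ is SPD. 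In particular it is invertible, so $\phi=F^{-1}b$ exists and is unique.

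\textbf{Step 3: stability bound.} For symmetric matrices, the quadratic-form inequality in Step 2 gives $\lambda_{\min}(F)\ge\lambda_{\min}(M)$ by Rayleigh quotients, equivalently by Weyl's inequality. Because $F$ is SPD, $\|F^{-1}\|_2 = 1/\lambda_{\min}(F)$. Therefore
\[
\|\phi\| \le \|F^{-1}\|_2\,\|b\| \le \|b\|/\lambda_{\min}(M).
\]
Alternatively, Cauchy--Schwarz gives the same bound directly:
\[
\lambda_{\min}(M)\|\phi\|^2 \le \phi^\top F\phi = \phi^\top b \le \|\phi\|\|b\| .
\]

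\textbf{Main obstacle.} The only delicate step is strict positivity of $\lambda_{\min}(M)$. I expect to handle it by making explicit that every cell in $\Omega$ passes the occupancy cutoff, so $O(c)$ is bounded below by that cutoff. Interior cells filled by the winding number should be treated as carrying $O(c)>0$ as well. If some of them could instead carry zero occupancy, the fallback is an argument of a different kind. One would use connectivity through faces with $\alpha_{cd}>0$ to a cell with positive $M+S$. Irreducible diagonal dominance would then still give definiteness, but that route would not give the stated bound.
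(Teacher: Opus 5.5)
Your proof is correct and takes the paper's route: $D$ is a weighted graph Laplacian with a nonnegative Dirichlet form, $S$ is a nonnegative diagonal, and $M$ is a strictly positive diagonal, so $F$ is SPD with $\lambda_{\min}(F)\ge\lambda_{\min}(M)$, which gives the bound. The paper derives $\sigma_a>0$ from the implementation (albedo capped at $0.999$ with $\sigma_t\ge\sigma_{\min}$, so $\sigma_a\ge 10^{-3}\sigma_{\min}$) and leaves $O(c)>0$ on active cells implicit; your appeal to the occupancy cutoff, with your caveat about winding-number-filled interior cells, covers a point the paper's proof skips.
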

\begin{proof}
$D$ is a weighted graph Laplacian on the active cells ($\mathbf{v}^\top D \mathbf{v} = \sum_{(c,d)} \kappa_{cd}\,\alpha_{cd}\,(v_c - v_d)^2 / h^2 \ge 0$) and $S$ is diagonal and nonnegative. $M$ is diagonal with strictly positive entries because the single-scattering albedo $\alpha = \sigma_s/\sigma_t$ is capped at $0.999$ while $\sigma_t \ge \sigma_{\min}$, hence $\sigma_a = (1-\alpha)\,\sigma_t \ge (1 - 0.999)\,\sigma_{\min} > 0$. The sum is therefore symmetric positive definite, the CG iteration is well defined and convergent, and $\lambda_{\min}(F) \ge \lambda_{\min}(M)$ bounds the solve.
\end{proof}

\begin{proposition}[Adjoint gradients for the full operator]
\label{prop:adjoint}
For a scalar loss $\mathcal{L}(\phi)$ with $\phi = F^{-1} b$, the parameter gradients are
$\partial_b \mathcal{L} = F^{-1} \nabla_\phi \mathcal{L}$ and, for any parameter $\theta$ entering $F$,
$\partial_\theta \mathcal{L} = -\,(F^{-1} \nabla_\phi \mathcal{L})^\top (\partial_\theta F)\, \phi$.
Since $F$ is symmetric, the adjoint system is the same operator, so backpropagation through the solve costs one additional CG solve.
\end{proposition}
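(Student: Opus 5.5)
The plan is the standard implicit-function / Lagrangian argument for a linear solve, with Proposition~\ref{prop:spd} supplying invertibility and symmetry. By Proposition~\ref{prop:spd}, $F$ is symmetric positive definite, hence invertible. The map $(b,\theta)\mapsto\phi=F(\theta)^{-1}b$ is therefore smooth wherever $F$ depends smoothly on $\theta$. That holds here because the entries of $M$ and $S$ are differentiable in the medium coefficients, and the face conductances are held at stop-gradient. I would first differentiate the identity $F\phi=b$ in the direction of an arbitrary perturbation, which gives
\begin{equation*}
F\,\partial\phi \;=\; \partial b - (\partial F)\,\phi ,
\qquad\text{so}\qquad
\partial\phi \;=\; F^{-1}\big(\partial b - (\partial F)\,\phi\big).
\end{equation*}

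Next I would apply the chain rule, $\partial\mathcal{L} = (\nabla_\phi\mathcal{L})^\top \partial\phi$. Substituting the expression above gives
\[
\partial\mathcal{L} = (\nabla_\phi\mathcal{L})^\top F^{-1}\partial b - (\nabla_\phi\mathcal{L})^\top F^{-1}(\partial F)\phi .
\]
Define the adjoint variable $\lambda$ as the solution of $F^\top\lambda=\nabla_\phi\mathcal{L}$. Then $(\nabla_\phi\mathcal{L})^\top F^{-1}=\lambda^\top$, and reading off coefficients yields the two formulas. For $b$, $\partial_b\mathcal{L}=\lambda$. For a scalar $\theta$ entering $F$, $\partial_\theta\mathcal{L}=-\lambda^\top(\partial_\theta F)\phi$. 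Since $F^\top=F$ by Proposition~\ref{prop:spd}, $\lambda=F^{-1}\nabla_\phi\mathcal{L}$, which matches the statement.

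For the cost claim, $\lambda$ solves a system with the same SPD operator, so the same preconditioned CG applies and converges by Proposition~\ref{prop:spd}. Only the converged $\phi$ and $\lambda$ need to be stored, not the iterates. I would also specialize the formula to the parameters actually differentiated. When $\theta$ enters only the diagonal of $M+S$, $\partial_\theta F$ is diagonal, and the gradient reduces to $-\sum_c \lambda_c\,\partial_\theta(F_{cc})\,\phi_c$. This is an elementwise product that is cheap to evaluate.

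I do not expect a real obstacle. The only point needing care is the precise meaning of ``costs one additional CG solve''. In exact arithmetic the argument is immediate. In practice CG is stopped at a tolerance, so the formulas give the gradient of the exact solution rather than of the truncated iterate. I would state that the proposition concerns the exact solve, and note that the residual tolerance controls the gradient error through $\|F^{-1}\|\le 1/\lambda_{\min}(M)$ from Proposition~\ref{prop:spd}.
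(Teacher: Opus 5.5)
Your proposal is correct and matches the paper's proof. Both differentiate $F\phi=b$ to get $d\phi=F^{-1}\big(db-(dF)\phi\big)$, introduce the adjoint $\lambda=F^{-\top}\nabla_\phi\mathcal{L}$, and use the symmetry from Proposition~\ref{prop:spd} to replace $F^{-\top}$ by $F^{-1}$; your extra remarks on truncated CG and the diagonal case are consistent with Appendix~\ref{app:impl} but go beyond what the paper's one-line proof states.
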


\begin{proof}
Differentiating $F\phi = b$ gives $d\phi = F^{-1}(db - (dF)\phi)$, and setting $\boldsymbol{\lambda} = F^{-\top}\nabla_\phi\mathcal{L} = F^{-1}\nabla_\phi\mathcal{L}$ (symmetry) gives both expressions. This is the standard adjoint-state method.
\end{proof}

\begin{lemma}[Reciprocity in heterogeneous media]
\label{lem:recip}
The variable-coefficient diffusion operator $\mathcal{A}\phi = -\nabla\!\cdot\!(\kappa(\mathbf{x})\nabla\phi) + \sigma_a(\mathbf{x})\phi$ with the Robin condition of Eq.~\ref{eq:pde} is self-adjoint; its Green's function satisfies $G(\mathbf{x}, \mathbf{x}') = G(\mathbf{x}', \mathbf{x})$ for arbitrary $\kappa > 0$, $\sigma_a \ge 0$.
\end{lemma}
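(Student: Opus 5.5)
The plan is to show self-adjointness through the symmetric bilinear form given by Green's second identity, and then read off reciprocity of the Green's function from it. Let $\beta = 1/(2A(\eta)) > 0$, so the Robin condition of Eq.~\ref{eq:pde} reads $\kappa\,\partial_n\phi = -\beta\phi$ on $\partial\Omega$. Take $u, v$ smooth enough, say in $H^1(\Omega)$ with $\kappa\nabla u, \kappa\nabla v \in H(\mathrm{div})$, and both satisfying this condition. Integrating by parts once gives
\begin{equation}
\int_\Omega v\,\mathcal{A}u \,d\mathbf{x} \;=\; \int_\Omega \big(\kappa\,\nabla u\cdot\nabla v + \sigma_a\,u v\big)\,d\mathbf{x} \;-\; \int_{\partial\Omega} v\,\kappa\,\partial_n u \,dS .
\end{equation}
Substituting the boundary condition turns the last integral into $+\beta\int_{\partial\Omega} uv\,dS$. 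The result is the bilinear form
\begin{equation}
a(u,v) = \int_\Omega \kappa\,\nabla u\cdot\nabla v + \sigma_a u v \,d\mathbf{x} + \beta\int_{\partial\Omega} u v\,dS ,
\end{equation}
which is manifestly symmetric in $u$ and $v$ for any positive $\kappa(\mathbf{x})$ and nonnegative $\sigma_a(\mathbf{x})$. Hence $\langle \mathcal{A}u, v\rangle = \langle u, \mathcal{A}v\rangle$. Heterogeneity never enters, because $\kappa$ multiplies a symmetric product of the two gradients.

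Next I would make $G$ well defined. On $H^1(\Omega)$ the form $a$ is bounded. It is also coercive, since $a(u,u) \ge \min(\kappa)\|\nabla u\|^2 + \beta\|u\|_{L^2(\partial\Omega)}^2$. The bound by $c\|u\|_{H^1}^2$ then follows from a Friedrichs-type inequality: the $H^1$ norm is controlled by the gradient plus the boundary trace. Note that $\beta > 0$ is what allows $\sigma_a \ge 0$ rather than the strict positivity used in Proposition~\ref{prop:spd}. Lax--Milgram therefore gives a unique weak solution operator $\mathcal{A}^{-1}$, and I would define $G(\cdot,\mathbf{x}')$ as its kernel: the solution of $\mathcal{A}G(\cdot,\mathbf{x}') = \delta_{\mathbf{x}'}$ with the Robin condition, understood as a limit of mollified sources.

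The symmetry of $G$ then follows from the symmetry of $\mathcal{A}^{-1}$. For sources $f, g \in L^2$, set $u = \mathcal{A}^{-1}f$ and $w = \mathcal{A}^{-1}g$. Then
\begin{equation}
\langle \mathcal{A}^{-1} f, g\rangle = a(w, u) = a(u, w) = \langle f, \mathcal{A}^{-1} g\rangle .
\end{equation}
Writing both sides through the kernel gives
\begin{equation}
\iint f(\mathbf{x}')\,\big[G(\mathbf{x},\mathbf{x}') - G(\mathbf{x}',\mathbf{x})\big]\,g(\mathbf{x})\,d\mathbf{x}\,d\mathbf{x}' = 0
\end{equation}
for all $f$ and $g$, so $G(\mathbf{x},\mathbf{x}') = G(\mathbf{x}',\mathbf{x})$ almost everywhere. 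Equivalently, one can formally apply Green's second identity to $G(\cdot,\mathbf{x}_1)$ and $G(\cdot,\mathbf{x}_2)$ and note that the boundary terms $\beta\int(G_1G_2 - G_2G_1)\,dS$ cancel. As a remark, the discrete analogue is the symmetry of $F$ in Proposition~\ref{prop:spd}, so $F^{-1}$ is symmetric too.

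\textbf{Main obstacle.} The hardest step is regularity. Because $\kappa$ is only bounded and possibly discontinuous, and $\delta$ sources are not in $H^{-1}$ in three dimensions, pointwise statements about $G$ need care. I would handle this with the distributional kernel argument above, via Schwartz kernels, which yields symmetry almost everywhere. If pointwise continuity off the diagonal is needed, I would invoke De Giorgi--Nash--Moser or the Grüter--Widman theory of Green's functions for divergence-form operators; these require only bounded measurable coefficients, so symmetry upgrades to pointwise symmetry for $\mathbf{x} \ne \mathbf{x}'$.
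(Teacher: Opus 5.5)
Your proposal is correct and takes essentially the paper's route: Green's identity with the Robin condition $\kappa\,\partial_n\phi = -\phi/(2A)$ substituted, so that the boundary term becomes symmetric. You package this as the symmetric form $a(u,v)$ from the first identity, while the paper shows that $\langle\mathcal{A}\phi,\psi\rangle - \langle\phi,\mathcal{A}\psi\rangle$ vanishes using the second identity. Where the paper only says that symmetry of $G$ ``follows in the standard way,'' your Lax--Milgram step and your symmetric-inverse kernel argument supply exactly that omitted step. The Lax--Milgram step uses $\beta>0$ to cover $\sigma_a \ge 0$, and it tacitly assumes $\kappa$ is bounded above and below, which the paper's sigmoid bounds $[\sigma_{\min},\sigma_{\max}]$ on the medium coefficients guarantee.
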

\begin{proof}
For $\phi, \psi$ satisfying the Robin condition,
$\langle \mathcal{A}\phi, \psi\rangle - \langle \phi, \mathcal{A}\psi\rangle
= \oint \kappa\,(\phi\,\partial_n\psi - \psi\,\partial_n\phi)\, dS
= \oint \tfrac{1}{2A}(\phi\psi - \psi\phi)\, dS = 0$,
using Green's identity and then the boundary condition to substitute $\kappa\,\partial_n \phi = -\phi/(2A)$. The zeroth-order term is symmetric trivially, and symmetry of $G$ follows in the standard way. Our discretization preserves it exactly (Proposition~\ref{prop:spd}: $F = F^\top$).
\end{proof}

\begin{remark}[Superposition over lights]
The diffusion equation is linear in the incident source and $b$ is linear in the incident field, so several lights superpose, and the learned components preserve this at the level of the operator: $b$ is assembled per light, the solve is applied to their sum, and the visibility decay multiplies each light's own ballistic term. Two qualifications. The split into single-scattered ($L_1$) and diffused ($L_{\mathrm{ms}}$) radiance is not strictly disjoint, because the source we assemble is not albedo-reduced (Section~\ref{sec:sss2}); this does not affect superposition, since both terms are built per light. And the material coefficients are conditioned on direction (Section~\ref{sec:sss1}), so superposition is exact for the transport given the medium, not for the medium itself.
\end{remark}

\begin{remark}[Degeneration as $\sigma_s \to 0$]
\label{rem:degen}
As $\sigma_s \to 0$ the source $b = \sigma_s E O$ of Eq.~\ref{eq:source} vanishes linearly, so
$\phi \to 0$ and $L_{\mathrm{ms}} \to 0$, while the operator $F = D + M + S$ stays symmetric
positive definite throughout: the channel switches itself off continuously rather than the solve
degenerating. On opaque materials the surface terms of Eq.~\ref{eq:skeleton} are therefore left
unperturbed.
\end{remark}

\begin{remark}[Exit coefficient]
The exit factor $(1 + 1/A(\eta))/(4\pi)$ in Eq.~\ref{eq:exit} follows from substituting the Robin relation $\kappa\,\partial_n\phi = -\phi/(2A)$ into the first-order angular expansion $L(\boldsymbol{\omega}) = \frac{1}{4\pi}(\phi + 3\,\mathbf{J}\cdot\boldsymbol{\omega})$ with $\mathbf{J} = -\kappa\nabla\phi$, evaluated at the boundary for outgoing directions and averaged over the outgoing hemisphere with a cosine weight; it is the $\Gamma_s = 0$ specialization of the render query of \citet{arbree2011heterogeneous}. At $\eta = 1$, $A = 1$ and the factor is $1/(2\pi)$. The exit carries no further factor of $\sigma_s$: it already enters through the source, and $\phi$ is linear in it, so applying it again would make the exitant radiance quadratic in a chromatic coefficient.
\end{remark}

\section{Implementation details}
\label{app:impl}

\paragraph{Per-frame cost.}
Per frame the transport adds a source march, one finite-volume solve and the $32$-sample single-scatter quadrature.
The operator has one row per active grid cell, between $5.1\times10^{4}$ and $1.4\times10^{5}$ cells across the $21$
scenes (median $7.4\times10^{4}$ of the $64^3$ total). At inference a precomputed transfer basis~\citep{sloan2002prt} over a Fibonacci light
shell warm-starts the solve, so eight preconditioned conjugate-gradient iterations suffice. Measured end to end on
four scenes, test PSNR differs from the converged solve by $-0.009$ to $+0.000$\,dB, and every number this paper
reports is produced with that setting. The fixed-topology parts of the pipeline are replayed as CUDA graphs. Because
the grid is $64^3$ for every scene, frame time follows the primitive count rather than the pixel count, so a
sixteen-fold increase in pixels at a matched primitive count (\textsc{Bunny} and \textsc{CupFabric}, both near
$82$k) adds only a few milliseconds while the densest scene (\textsc{Drums}, $550$k) costs $47.07$\,ms. Training runs on a single
GPU with the same schedule as the baselines, with the calibration gain trained for the first $20$k iterations and
then frozen.

\begin{table}[!htb]\centering\footnotesize
\caption{\textbf{Render cost.} A single run on one A6000, reported per dataset and per resolution.}
\label{tab:fps}
\begin{tabular}{llrr}
\toprule
dataset & res. & scenes & FPS \\
\midrule
SSS-GS synthetic & $256^2$  & 5 & $64.5 \pm 4.9$ \\
SSS-GS real      & $512^2$  & 3 & $66.0 \pm 10.2$ \\
NRHints real     & $512^2$  & 4 & $46.0 \pm 2.5$ \\
                 & $1024^2$ & 3 & $54.8 \pm 11.7$ \\
GS$^3$ synthetic & $512^2$  & 6 & $32.9 \pm 11.3$ \\
\bottomrule
\end{tabular}

\vspace{6pt}
\begin{tabular}{lrrrrrr}
\toprule
 & \multicolumn{2}{c}{\textsc{Bunny} ($82$k)} & \multicolumn{2}{c}{\textsc{Statue} ($108$k)}
 & \multicolumn{2}{c}{\textsc{Drums} ($550$k)} \\
\cmidrule(lr){2-3}\cmidrule(lr){4-5}\cmidrule(lr){6-7}
stage & ms & \% & ms & \% & ms & \% \\
\midrule
visibility rasterization & $3.48$ & $23.5$ & $3.64$ & $22.4$ & $8.18$ & $17.4$ \\
shading, pre-solve       & $0.96$ & $6.5$ & $1.25$ & $7.7$ & $7.20$ & $15.3$ \\
transport solve          & $8.19$ & $55.3$ & $8.92$ & $54.7$ & $23.81$ & $50.6$ \\
composition, post-solve  & $0.38$ & $2.5$ & $0.49$ & $3.0$ & $2.41$ & $5.1$ \\
primary rasterization    & $1.80$ & $12.2$ & $1.98$ & $12.2$ & $5.46$ & $11.6$ \\
\midrule
frame                    & $14.81$ & $100$ & $16.29$ & $100$ & $47.07$ & $100$ \\
\bottomrule
\end{tabular}
\end{table}

\begin{algorithm}[t]
\caption{One transport evaluation in VolS-GS. Steps 1--2 run when densification changed the Gaussian count and in any case every $300$ iterations.}
\label{alg:transport}
\begin{algorithmic}[1]
\Require Gaussians $\{(\mathbf{x}_i, \mathbf{q}_i, \mathbf{s}_i, o_i, \mathbf{z}_i)\}$, light position $\mathbf{p}_\ell$, camera
\Ensure per-Gaussian subsurface radiance $L^{\mathrm{sss}}_i$
\State $\Omega \gets \{$cells whose splatted occupancy exceeds the threshold$\}$ \Comment{$64^3$ grid over the scene bounds}
\State splat $(\boldsymbol{\kappa}, \boldsymbol{\sigma}_a, \boldsymbol{\sigma}_s)$ to cell centers; fill interior cells from the nearest surface cell
\ForAll{$c \in \Omega$} \Comment{source, Eq.~\ref{eq:source}}
  \State $\tau_c \gets$ occupancy integrated over $32$ steps from $c$ toward $\mathbf{p}_\ell$
  \State $b_c \gets \sigma_{s,c}\, O_c \exp(-\sigma_{t,c}\tau_c)\,/\,d_c^2$
\EndFor
\State assemble $F = D + M + S$ \Comment{face conductances, absorption, Robin boundary}
\State $\phi \gets \mathrm{CG}(F, b)$ \Comment{warm-started from the transfer basis at inference}
\State $\phi_i \gets$ trilinear sample of $\phi$ at $\mathbf{x}_i$ \Comment{pull back to primitives}
\State $L^{\mathrm{sss}}_i \gets \mathbf{k}^{\mathrm{sss}}_i \odot g_{\mathrm{cal}}\,\big(\,\mathrm{exit}(\phi_i) + \mathrm{singleScatter}(\mathbf{x}_i, \mathbf{p}_\ell)\,\big)$
\Statex \textbf{backward:} solve $F\lambda = \partial\mathcal{L}/\partial\phi$ with the same operator; $\lambda$ gives the gradients of $b$ and of the diagonal of $M+S$. Face conductances are stop-gradient, and the solver iterations are never unrolled.
\end{algorithmic}
\end{algorithm}

\paragraph{Transport implementation.}
Algorithm~\ref{alg:transport} states the steps. The grid is rebuilt whenever densification changes the Gaussian count and in any case every
$300$ iterations, and the interface-area estimate entering the Robin term is box-smoothed over
$3^3$ cells and renormalized over the active cells, the standard treatment for a surface term on a
staircase grid~\citep{brackbill1992csf}. Without it the solve is not grid convergent: $\phi$ at $64^3$ and
at $96^3$ agrees to a correlation of $0.08$, and to $0.97$ with it, while the source assembled from the
same fields already agrees to $0.96$ at both resolutions, which places the discrepancy in the operator
rather than in the splatting or the march. Training runs $20$ conjugate-gradient iterations preconditioned by a three-level geometric V-cycle
\citep{stam1995multiple} to a tolerance of $10^{-3}$, and both the forward and the adjoint solve are truncated, so
the gradient is the implicit-function estimator evaluated at an approximate solution. The source attenuation is factorised as $\exp(-\sigma_t(c)\,\tau(c))$ rather than integrated along the ray.
The march accumulates occupancy only, so $\tau$ is a geometric quantity that survives until the grid is
rebuilt, while $\sigma_t$ is refreshed at every step; a path-integrated extinction would have to be
remarched for every cell and light at every iteration. The factorisation is exact when the medium is
uniform along the path and approximate otherwise.

\paragraph{Material head and shading.}
The material head is a small MLP over the positional encoding, the per-Gaussian latent, and the frequency-encoded light and view directions; outputs map to $[\sigma_{\min}, \sigma_{\max}] = [10^{-3}, 25]$ per channel through a scaled sigmoid. We set the exit index to $\eta = 1.0$ on every scene, including the real captures, so the Fresnel factors reduce to identity and the residual boundary scale is absorbed by the calibration above.
The specular lobe is a mixture of $8$ anisotropic spherical Gaussians with scene-global lobe frames and widths and per-Gaussian amplitudes and specular color, evaluated in the Gaussian's local frame with a Schlick Fresnel factor~\citep{schlick1994inexpensive}. The decay MLP of Eq.~\ref{eq:decay} has three hidden layers of $32$ units.

\paragraph{Training.}
Losses, schedules, densification, and optimizer settings are inherited unchanged from the base relighting system, plus the scale-consistency penalty (weight $0.05$) that discourages single splats from spanning many transport cells. A single configuration is used for all scenes. Retrained baselines follow the same per-dataset schedule. Every number in the paper, ours and every baseline's, is read at the protocol iteration for its dataset (60k or 100k), which each table states.

\section{Additional qualitative results}
\label{app:addqual}

\subsection{Per-scene results}
\label{app:perscene}

Tables~\ref{tab:real} and~\ref{tab:gs3syn} give the per-scene numbers behind the dataset means of Table~\ref{tab:sss}; the SSS-GS family is already per scene in Table~\ref{tab:sss}.
Figure~\ref{fig:addqual} shows five further test frames beside the tables, chosen to cover all three dataset families rather than to favour any one of them.

\begin{table}[!htb]\centering
\caption{\textbf{Real captures, per scene} (NRHints). The 3DGS and GI-GS rows are quoted from \citet{zheng2026ssdgs}. Every other row is our own retraining to the $100$k protocol iteration. Best \colorbox{red!25}{red}, second \colorbox{orange!25}{orange} among retrained rows. }
\label{tab:real}
\resizebox{\textwidth}{!}{%
\begin{tabular}{llccccccc}
\toprule
 & Method & \textsc{Cat} & \textsc{CatSmall} & \textsc{CupFabric} & \textsc{Fish} & \textsc{FurScene} & \textsc{Pikachu} & \textsc{Pixiu} \\
\midrule
\multirow{7}{*}{PSNR$\uparrow$} & 3DGS & 14.5326 & 22.5727 & 25.0488 & 22.8411 & 18.4838 & 19.6310 & 18.5501 \\
 & GI-GS & 13.9988 & 22.3222 & 24.3821 & 22.7500 & 17.8520 & 19.1867 & 18.1064 \\
 & GS$^3$ & 27.7597 & 34.6288 & 36.3707 & \cellcolor{orange!25}31.1387 & 30.5467 & \cellcolor{orange!25}32.6480 & 30.4776 \\
 & SSS-GS & 16.6533 & 31.4044 & 32.9904 & 22.8345 & 20.4048 & 28.4206 & 23.4290 \\
 & RNG & 26.7680 & 34.2488 & \cellcolor{red!25}37.1390 & 29.2007 & 27.8608 & 31.1881 & 28.7688 \\
 & SSD-GS & \cellcolor{orange!25}27.9527 & \cellcolor{orange!25}34.6987 & 36.9159 & 31.0531 & \cellcolor{orange!25}30.6964 & 32.1040 & \cellcolor{orange!25}30.9603 \\
\cmidrule(lr){2-9}
 & \textbf{VolS-GS (Ours)} & \cellcolor{red!25}28.1084 & \cellcolor{red!25}35.0647 & \cellcolor{orange!25}37.0219 & \cellcolor{red!25}31.1809 & \cellcolor{red!25}30.8567 & \cellcolor{red!25}32.9575 & \cellcolor{red!25}31.3762 \\
\midrule
\multirow{7}{*}{SSIM$\uparrow$} & 3DGS & 0.6962 & 0.8896 & 0.9430 & 0.8312 & 0.7869 & 0.9000 & 0.8298 \\
 & GI-GS & 0.3162 & 0.8750 & 0.9178 & 0.7437 & 0.5811 & 0.8724 & 0.6117 \\
 & GS$^3$ & 0.9059 & \cellcolor{orange!25}0.9766 & 0.9818 & \cellcolor{red!25}0.9254 & 0.9503 & \cellcolor{orange!25}0.9678 & 0.9424 \\
 & SSS-GS & 0.6475 & 0.9464 & 0.9625 & 0.8065 & 0.7868 & 0.9477 & 0.8516 \\
 & RNG & 0.8464 & 0.9677 & \cellcolor{red!25}0.9827 & 0.8933 & 0.9163 & 0.9657 & 0.9180 \\
 & SSD-GS & \cellcolor{orange!25}0.9069 & 0.9760 & \cellcolor{orange!25}0.9825 & 0.9246 & \cellcolor{orange!25}0.9511 & 0.9668 & \cellcolor{orange!25}0.9446 \\
\cmidrule(lr){2-9}
 & \textbf{VolS-GS (Ours)} & \cellcolor{red!25}0.9084 & \cellcolor{red!25}0.9772 & 0.9823 & \cellcolor{red!25}0.9254 & \cellcolor{red!25}0.9522 & \cellcolor{red!25}0.9689 & \cellcolor{red!25}0.9447 \\
\midrule
\multirow{7}{*}{LPIPS$\downarrow$} & 3DGS & 0.3033 & 0.1183 & 0.0868 & 0.1814 & 0.2101 & 0.1103 & 0.1721 \\
 & GI-GS & 0.3518 & 0.1339 & 0.1116 & 0.2063 & 0.2492 & 0.1347 & 0.3089 \\
 & GS$^3$ & 0.1314 & 0.0650 & 0.0525 & \cellcolor{orange!25}0.0872 & \cellcolor{orange!25}0.0707 & 0.0673 & 0.0821 \\
 & SSS-GS & 0.3400 & 0.0881 & 0.0688 & 0.1809 & 0.2167 & 0.0744 & 0.1434 \\
 & RNG & 0.1992 & \cellcolor{orange!25}0.0632 & \cellcolor{red!25}0.0371 & 0.1292 & 0.1099 & \cellcolor{red!25}0.0521 & 0.1082 \\
 & SSD-GS & \cellcolor{orange!25}0.1313 & 0.0679 & 0.0519 & 0.0879 & 0.0737 & 0.0680 & \cellcolor{orange!25}0.0807 \\
\cmidrule(lr){2-9}
 & \textbf{VolS-GS (Ours)} & \cellcolor{red!25}0.1297 & \cellcolor{red!25}0.0630 & \cellcolor{orange!25}0.0516 & \cellcolor{red!25}0.0858 & \cellcolor{red!25}0.0701 & \cellcolor{orange!25}0.0658 & \cellcolor{red!25}0.0790 \\
\bottomrule
\end{tabular}}
\end{table}

\begin{figure}[p]
    \centering
    \includegraphics[width=\linewidth]{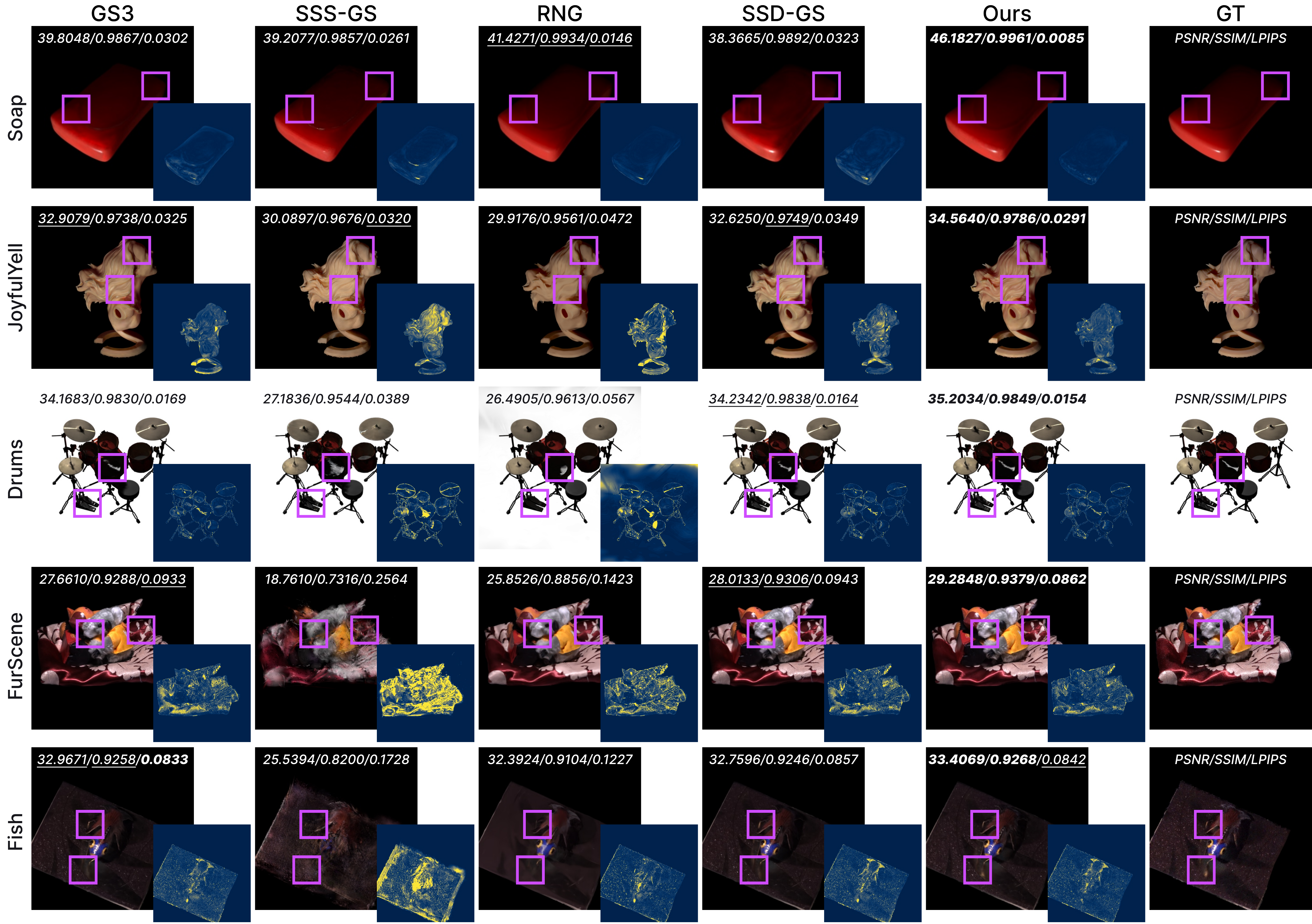}
    \caption{\textbf{Additional qualitative comparisons.} Five further test frames, one per row: \textsc{Soap} and \textsc{JoyfulYell} from the subsurface set, \textsc{Drums} from the GS$^3$ synthetic set, \textsc{FurScene} and \textsc{Fish} real captures. Per-frame PSNR/SSIM/LPIPS are printed on each panel, \textbf{best} bold and \underline{second} underlined; the inset is $\lvert\text{render}-\text{GT}\rvert$ clipped at $0.15$ on one shared ramp.}
    \label{fig:addqual}
\end{figure}

\begin{table}[p]\centering
\caption{\textbf{GS$^3$ synthetic scenes, per scene} trained at $100$K iterations. Notation as in Table~\ref{tab:real}.}
\label{tab:gs3syn}
\resizebox{\textwidth}{!}{%
\begin{tabular}{llcccccc}
\toprule
 & Method & \textsc{Translucent} & \textsc{AnisoMetal} & \textsc{Drums} & \textsc{FurBall} & \textsc{Hotdog} & \textsc{Lego} \\
\midrule
\multirow{7}{*}{PSNR$\uparrow$} & 3DGS & 16.4899 & 17.1009 & 24.5093 & 20.1206 & 16.9535 & 15.9886 \\
 & GI-GS & 16.0766 & 15.9567 & 24.6177 & 19.5295 & 16.8118 & 16.4229 \\
 & GS$^3$ & 29.7079 & \cellcolor{orange!25}28.5958 & 32.9349 & 34.0920 & \cellcolor{red!25}32.2632 & \cellcolor{orange!25}29.0746 \\
 & SSS-GS & 28.2425 & 24.2328 & 28.4225 & 31.0156 & 30.8217 & 26.6436 \\
 & RNG & 27.1813 & 26.1441 & 29.7962 & 33.0337 & 29.8325 & 25.8664 \\
 & SSD-GS & \cellcolor{orange!25}29.9792 & 28.2052 & \cellcolor{orange!25}33.0411 & \cellcolor{orange!25}34.1532 & 31.6721 & 28.9808 \\
\cmidrule(lr){2-8}
 & \textbf{VolS-GS (Ours)} & \cellcolor{red!25}31.6353 & \cellcolor{red!25}28.6080 & \cellcolor{red!25}33.2051 & \cellcolor{red!25}34.2224 & \cellcolor{orange!25}31.7383 & \cellcolor{red!25}30.1361 \\
\midrule
\multirow{7}{*}{SSIM$\uparrow$} & 3DGS & 0.8958 & 0.8849 & 0.9439 & 0.8951 & 0.8599 & 0.7904 \\
 & GI-GS & 0.8586 & 0.8304 & 0.8941 & 0.8592 & 0.8282 & 0.7647 \\
 & GS$^3$ & 0.9710 & \cellcolor{orange!25}0.9627 & 0.9838 & 0.9683 & \cellcolor{red!25}0.9724 & \cellcolor{orange!25}0.9474 \\
 & SSS-GS & 0.9562 & 0.9249 & 0.9619 & 0.9534 & 0.9616 & 0.9134 \\
 & RNG & 0.9659 & 0.9509 & 0.9719 & 0.9587 & 0.9622 & 0.9267 \\
 & SSD-GS & \cellcolor{orange!25}0.9724 & 0.9606 & \cellcolor{orange!25}0.9838 & \cellcolor{red!25}0.9703 & 0.9711 & 0.9447 \\
\cmidrule(lr){2-8}
 & \textbf{VolS-GS (Ours)} & \cellcolor{red!25}0.9783 & \cellcolor{red!25}0.9630 & \cellcolor{red!25}0.9839 & \cellcolor{orange!25}0.9702 & \cellcolor{orange!25}0.9719 & \cellcolor{red!25}0.9539 \\
\midrule
\multirow{7}{*}{LPIPS$\downarrow$} & 3DGS & 0.0748 & 0.0704 & 0.0442 & 0.0865 & 0.1128 & 0.1416 \\
 & GI-GS & 0.1155 & 0.1179 & 0.1242 & 0.1694 & 0.1638 & 0.1643 \\
 & GS$^3$ & 0.0319 & \cellcolor{orange!25}0.0348 & \cellcolor{orange!25}0.0160 & 0.0529 & \cellcolor{orange!25}0.0340 & \cellcolor{orange!25}0.0490 \\
 & SSS-GS & 0.0521 & 0.0677 & 0.0421 & 0.0938 & 0.0472 & 0.0862 \\
 & RNG & 0.0336 & 0.0412 & 0.0273 & 0.0730 & 0.0423 & 0.0618 \\
 & SSD-GS & \cellcolor{orange!25}0.0304 & 0.0363 & 0.0165 & \cellcolor{orange!25}0.0466 & 0.0360 & 0.0515 \\
\cmidrule(lr){2-8}
 & \textbf{VolS-GS (Ours)} & \cellcolor{red!25}0.0246 & \cellcolor{red!25}0.0345 & \cellcolor{red!25}0.0155 & \cellcolor{red!25}0.0463 & \cellcolor{red!25}0.0335 & \cellcolor{red!25}0.0438 \\
\bottomrule
\end{tabular}}
\end{table}

\FloatBarrier
\subsection{Relighting under a light sweep}
\label{app:sweep}

\begin{figure}[p]
    \centering
    \includegraphics[width=0.95\linewidth]{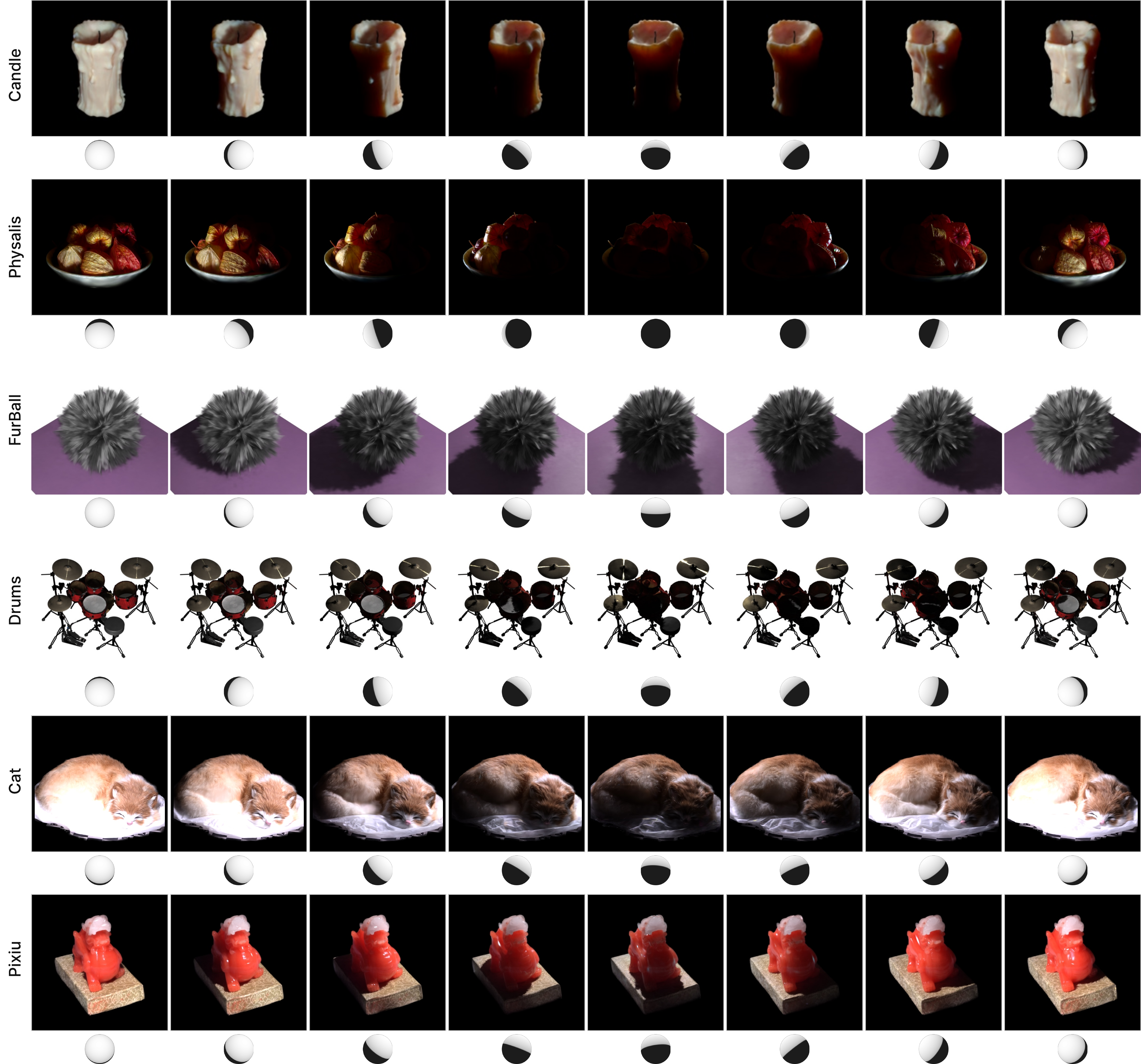}
    \caption{\textbf{Relighting under a continuous light sweep.} Eight unseen light positions per scene, one per column, with the disc below each column giving its light direction.}
    \label{fig:sweep}
\end{figure}

A fixed test split cannot show whether appearance stays coherent as the light moves, only whether it is right at the frames that were held out. Figure~\ref{fig:sweep} sweeps the light through a full revolution at the elevation the training lights actually occupy, so every sample sits inside the training distribution in elevation and differs only in azimuth. The cast shadow and the subsurface glow remain spatially consistent as the light moves, an additional qualitative check on the learned decomposition.

\subsection{Environment map relighting}
\label{app:envmap}

\begin{figure}[p]
    \centering
    \includegraphics[width=\linewidth]{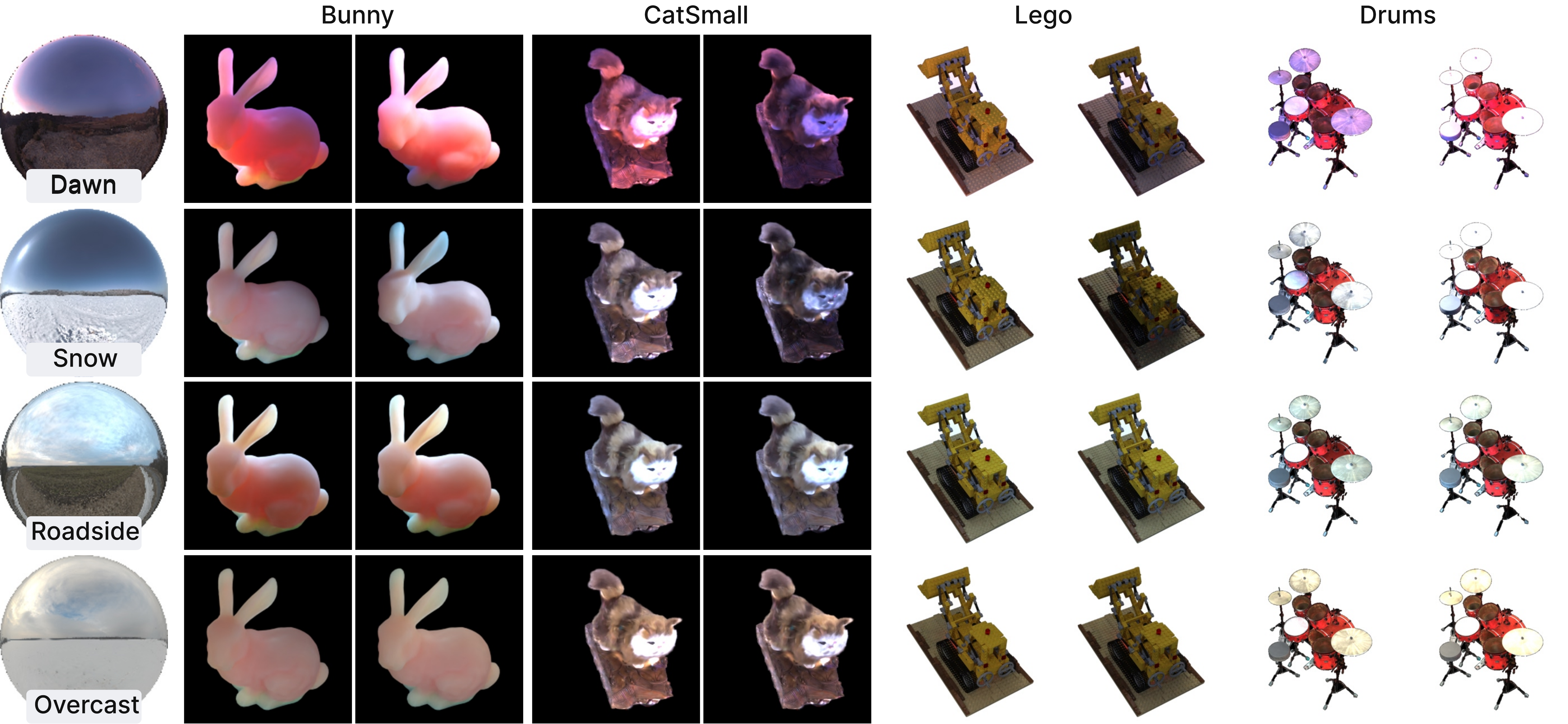}
    \caption{\textbf{Relighting under an environment map.} One row per map, shown as a mirror ball at the left; two frames of a full rotation of the map for each of four scenes. Nothing is retrained.}
    \label{fig:envmap}
\end{figure}

Our setting is one light at a time at a known position, and the source term of Eq.~\ref{eq:source} is written for exactly that. Relighting under an environment map is a different problem: there the illumination is unknown and has to be estimated alongside the material. We do not evaluate on those benchmarks and we do not compare against methods built for them.

The decomposition itself is not tied to a single light, however. An environment map is a sum of directional sources, and the solve is linear in the source, so relighting under one is the same computation repeated: we importance-sample $K=256$ directions from the map, place a distant light along each, and add the per-light renders with the map's own weights. Figure~\ref{fig:envmap} shows the result on four maps, with the map rotated about the vertical axis between the two frames of each pair. Nothing is retrained.

\section{Ablations on the remaining scenes}
\label{app:ablmore}
\begin{table}[!htb]\centering\footnotesize
\setlength{\tabcolsep}{4pt}
\caption{\textbf{Ablations on further scenes} (test PSNR$\uparrow$). The same five rows as Table~\ref{tab:ablation}, whose \textsc{Statue} and \textsc{Bunny} PSNR columns are repeated here, extended to five further scenes. Every arm is read at the protocol iteration for its family, $60$k on the SSS-GS scenes and $100$k elsewhere.}
\label{tab:ablmore}
\resizebox{\textwidth}{!}{%
\begin{tabular}{lccccccc}
\toprule
 & \textsc{Statue} & \textsc{Bunny} & \textsc{JoyfulYell} & \textsc{Translucent} & \textsc{Drums} & \textsc{CupFabric} & \textsc{Pikachu} \\
\midrule
\textbf{Full model} & \cellcolor{red!25}38.35 & \cellcolor{red!25}40.15 & \cellcolor{red!25}32.19 & \cellcolor{red!25}31.64 & \cellcolor{red!25}33.21 & \cellcolor{red!25}37.02 & \cellcolor{red!25}32.96 \\
$-$ Volumetric SSS & 36.19 & 34.67 & 31.54 & 30.88 & \cellcolor{orange!25}33.11 & 35.78 & 31.43 \\
$-$ Transport cues on shadow & 35.09 & \cellcolor{orange!25}39.17 & \cellcolor{orange!25}32.08 & 31.01 & 32.74 & 36.41 & 32.19 \\
Shadow term multiplied onto SSS & \cellcolor{orange!25}37.66 & 39.02 & 31.87 & \cellcolor{orange!25}31.15 & 32.93 & \cellcolor{orange!25}36.82 & \cellcolor{orange!25}32.46 \\
$-$ Shadow term & 34.41 & 36.30 & 29.24 & 28.42 & 32.47 & 34.07 & 28.81 \\
\bottomrule
\end{tabular}}
\end{table}

Table~\ref{tab:ablation} reports the single-change ablations on \textsc{Bunny} and \textsc{Statue} with all three metrics; Table~\ref{tab:ablmore} repeats their PSNR columns beside five further scenes, so the ordering can be read across every scene at once. Removing the shadow term is the largest single change on six of the seven scenes. \textsc{Bunny} is the exception and the clearest case for the channel: removing the subsurface term costs $5.48$\,dB there, the largest effect of any arm on any scene. What comes second follows the medium: the subsurface channel on \textsc{JoyfulYell}, \textsc{Translucent}, \textsc{CupFabric} and \textsc{Pikachu}, the transport cues on \textsc{Statue} and \textsc{Drums}. On \textsc{Drums}, whose translucent component is weak, removing the subsurface channel costs $0.10$\,dB, the least it costs on any scene, consistent with the degeneration of Remark~\ref{rem:degen}. The one design choice that does not move the rendered image, the specular leakage regularizer, is in Appendix~\ref{app:specreg}.

\section{The specular leakage regularizer}
\label{app:specreg}

\begin{figure}[!htb]
    \centering
    \begin{minipage}[t]{0.495\linewidth}
        \centering
        \includegraphics[width=\linewidth]{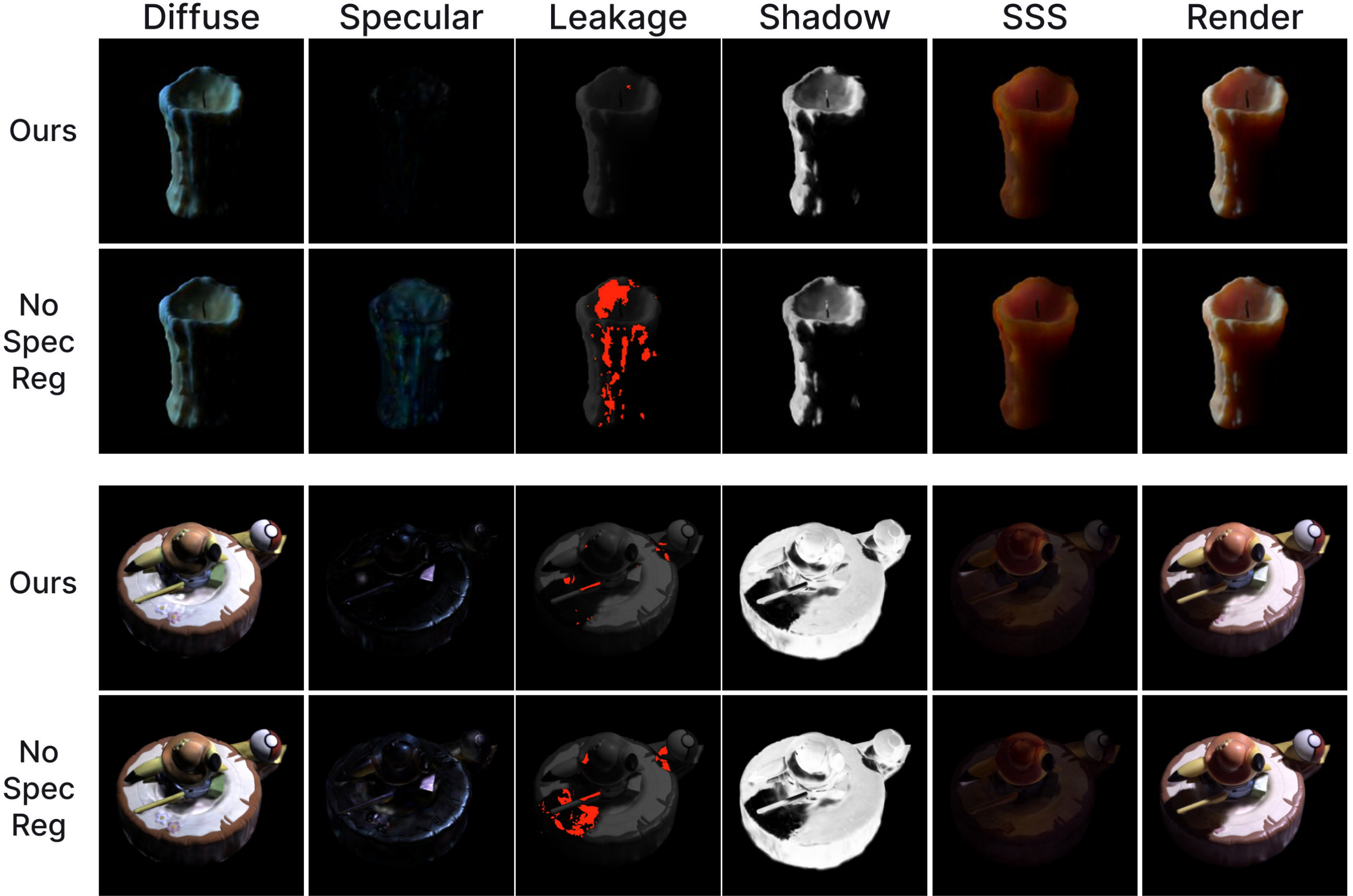}\\[3pt]
        {\small (a) Leakage removed by the regularizer}
    \end{minipage}\hfill
    \begin{minipage}[t]{0.495\linewidth}
        \centering
        \includegraphics[width=\linewidth]{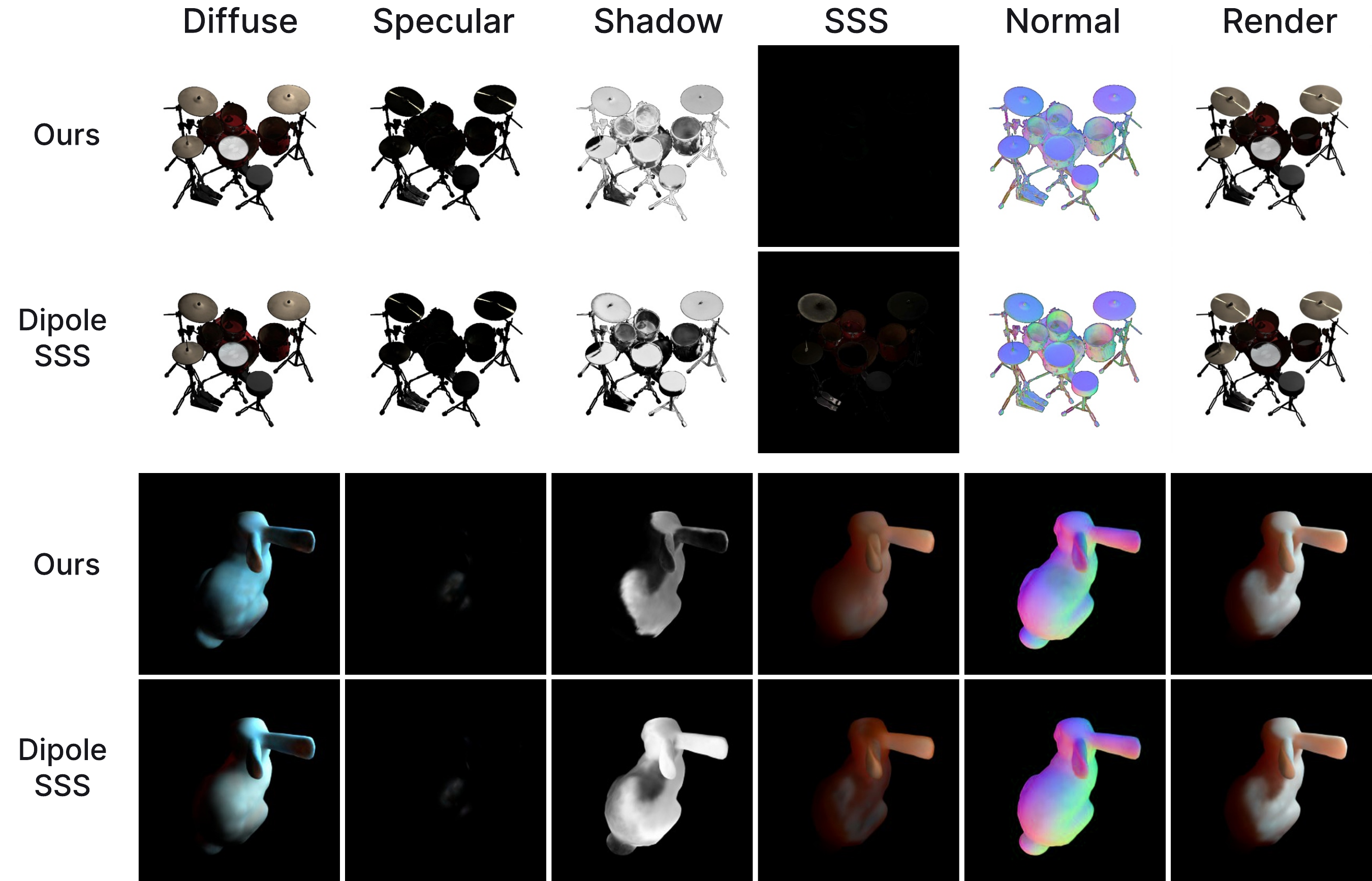}\\[3pt]
        {\small (b) Replacing the solve with the per-primitive kernel}
    \end{minipage}
    \caption{\textbf{Two reads of the subsurface slot.}
    \textbf{(a)} \textsc{Candle} and \textsc{Pikachu} with the specular leakage regularizer on
    (Ours) and off, same frame and channels. Without it the specular lobe paints a ghost of the
    object into shadowed regions and the diffuse term washes out. The \emph{Leakage} column marks
    in red the pixels where the model's own decay says the light is blocked ($D_i < 0.3$) while
    the specular lobe still fires above $0.05$.
    \textbf{(b)} \textsc{Drums} and \textsc{Bunny} with our solve (Ours) and with SSD-GS's
    per-primitive dipole in the same slot, everything else held fixed. \textsc{Drums} shows the channel taking up surface residual,
    \textsc{Bunny} the transported light misplaced. The SSS column of (b) is shown on its own black
    ground and at a $2.5\times$ gain so that both rows are legible.}
    \label{fig:attrib}
\end{figure}

Figure~\ref{fig:attrib}(a) shows the difference. This term does not earn PSNR and we do not
present it as if it did. Switching it on moves test PSNR by between $-0.30$ and $+0.14$\,dB across
the seven scenes where both arms exist, a mean of $-0.08$\,dB with a standard deviation of $0.14$,
and the sign changes from scene to scene.

The term earns its place by keeping the decomposition identifiable. Nothing in a photometric loss distinguishes a
warm glow that the specular lobe paints onto a shadowed pixel from the same glow carried there by
the medium, so the free lobe will absorb transported appearance whenever that is the cheaper fit.
The regularizer penalises specular energy in pixels the light does not reach, the
configuration in which a surface lobe cannot account for the radiance. On \textsc{Bunny} it
collapses the specular footprint from $68\%$ to $23\%$ of the object at unchanged PSNR, the same image attributed differently.

Switching the regularizer off changes the full model by $0.18$\,dB, within the same range as the
per-scene values above.

\section{Replacing the solve with a per-primitive kernel}
\label{app:dipole}

A diffusion profile is a kernel, not an outgoing radiance. The BSSRDF it belongs to integrates the
profile against the light arriving over the surface, $L_o(\mathbf{x}_o) = \int_A\!\int_\Omega
S(\mathbf{x}_i,\boldsymbol{\omega}_i,\mathbf{x}_o,\boldsymbol{\omega}_o)\,L_i\,\mathrm{d}\boldsymbol{\omega}\,\mathrm{d}A$,
so evaluating it requires knowing where light entered. The per-primitive kernel of SSD-GS does not
carry that integral. The profile is evaluated once, at a radius its network predicts, and scaled by
the summed inverse-square falloff of the lights. Our own coefficients are learned as well. The difference is that the dipole spreads the profile along
Euclidean distance from that radius, a route the medium need not occupy.

Thus we further investigate the effect of volumetric subsurface scattering against the dipole. Only
the subsurface slot changes, with the transport cues, the specular regularizer, the exit term and
the per-primitive amplitude held as in the full model. Table~\ref{tab:dipole} reports the
substitution, which costs accuracy on six of the seven scenes, by more than $2$\,dB on \textsc{CupFabric} and
\textsc{Pikachu}. Figure~\ref{fig:attrib}(b) shows where the
difference sits. On \textsc{Bunny} our channel is the smoother of the two and stays in the region
the decay leaves unlit, while the dipole spreads a comparable total over the whole body.
\textsc{Drums} is the one scene the substitution wins on PSNR and SSIM. Both channels pick up some surface residual
on an opaque object, the dipole by about two orders of magnitude more.

\begin{table}[!htb]\centering\footnotesize
\caption{\textbf{The dipole kernel in place of the transport solve.} Only the subsurface slot
changes: SSD-GS's own scattering module and its $\sum_\ell 1/d_\ell^2$ carrier replace our solve,
while the transport cues on the shadow branch, the specular regularizer, the exit term and the
per-primitive amplitude are left as in the full model.}
\label{tab:dipole}
\begin{tabular}{lccc ccc}
\toprule
 & \multicolumn{3}{c}{VolS-GS} & \multicolumn{3}{c}{Dipole SSS} \\
\cmidrule(lr){2-4}\cmidrule(lr){5-7}
 & PSNR$\uparrow$ & SSIM$\uparrow$ & LPIPS$\downarrow$ & PSNR$\uparrow$ & SSIM$\uparrow$ & LPIPS$\downarrow$ \\
\midrule
\textsc{Bunny}       & \textbf{40.15} & \textbf{0.9920} & \textbf{0.0112} & 39.52 & 0.9911 & 0.0119 \\
\textsc{Statue}      & \textbf{38.35} & \textbf{0.9899} & \textbf{0.0130} & 36.53 & 0.9862 & 0.0165 \\
\textsc{JoyfulYell}  & \textbf{32.19} & \textbf{0.9690} & \textbf{0.0385} & 31.68 & 0.9674 & 0.0408 \\
\textsc{Translucent} & \textbf{31.64} & \textbf{0.9783} & \textbf{0.0246} & 31.13 & 0.9778 & 0.0251 \\
\textsc{CupFabric}   & \textbf{37.02} & \textbf{0.9823} & \textbf{0.0516} & 34.61 & 0.9744 & 0.0550 \\
\textsc{Pikachu}     & \textbf{32.96} & \textbf{0.9689} & \textbf{0.0658} & 30.73 & 0.9592 & 0.0694 \\
\textsc{Drums}       & 33.21 & 0.9839 & \textbf{0.0155} & \textbf{33.24} & \textbf{0.9844} & 0.0158 \\
\bottomrule
\end{tabular}
\end{table}

\section{Decomposition across methods}
\label{app:compare}

\begin{figure}[!htb]
    \centering
    \includegraphics[width=\linewidth]{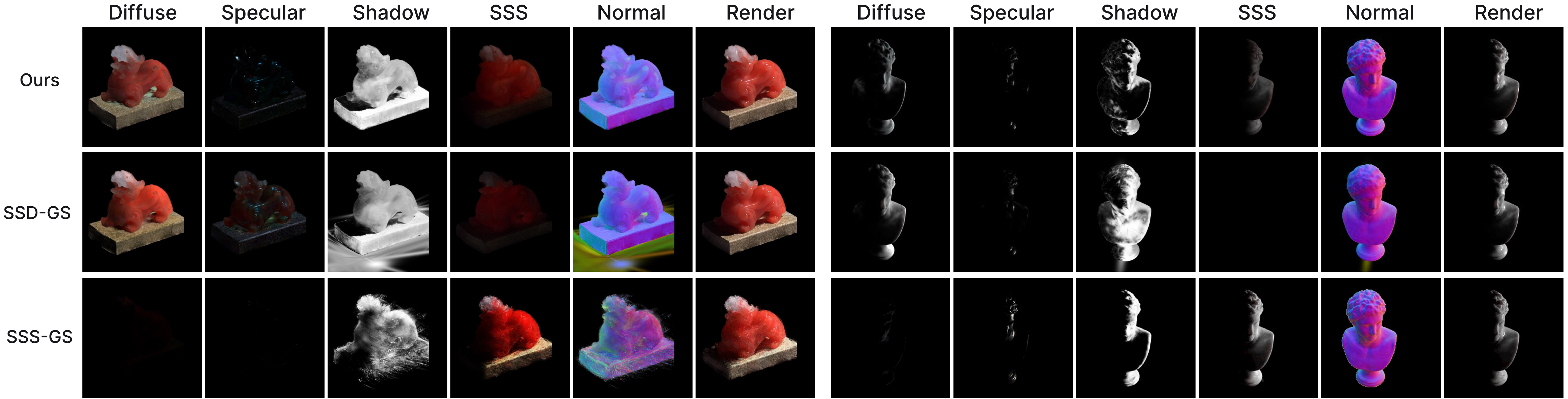}
    \caption{\textbf{Decomposition across methods.} \textsc{Pixiu} (left) and \textsc{Statue} (right),
    each row one method's own decomposition on the same frame. For SSS-GS the SSS column shows its residual
    network's output, its only channel with appreciable energy.}
    \label{fig:compare}
\end{figure}

Figure~\ref{fig:compare} reads the same decomposition across the two baselines that also
carry an explicit subsurface term, on \textsc{Pixiu} (left block) and
\textsc{Statue} (right block). On \textsc{Statue}, SSD-GS's subsurface channel is empty (channel-to-render
ratio $0.00$ against our $0.52$) while its rendered image still contains the backlit glow,
so the glow is not explained by its explicit subsurface component. SSS-GS's physical channels are nearly
dark on both scenes and its residual network carries more energy than its own final
render (ratio $1.5$ on \textsc{Statue}), indicating substantial energy that is not exposed as
the method's explicit subsurface component; its SSS column in the figure therefore shows that
residual. Our channel carries the backlit radiance itself, and
the render matches the reference.

SSD-GS's empty channel on \textsc{Statue} is the same failure that
Figure~\ref{fig:attrib}(b) isolates. The module whose subsurface channel reads $0.00$ here is the
per-primitive dipole we substituted into our own slot there, and in that controlled setting the
kernel either mis-places the transported light or fires where there is no medium to transport
through. A kernel with no reading of the medium has no stable quantity to put in
the channel, so the glow appears in the render but not in the decomposition.

\section{Failure cases}
\label{app:failure}

The thinnest geometry fails first. On \textsc{Bunny} the ear tip seen edge-on renders slightly dark, with less transmitted light than the capture (Figure~\ref{fig:failure}). The single-scattering term of Eq.~\ref{eq:single} exists for exactly this regime, where a lobe only a few mean free paths across is too thin for the diffusion approximation to be accurate, and it narrows the gap without closing it. Refining the transport grid from $64^3$ to $96^3$ moves the region only by $0.01$\,dB.

The second failure is on an opaque object. On \textsc{Hotdog} the plate under the bun renders darker than the capture. The appearance there is a broad and nearly uniform highlight-like glow, which our decomposition does not target.

\begin{figure}[!htb]
    \centering
    \includegraphics[width=\linewidth]{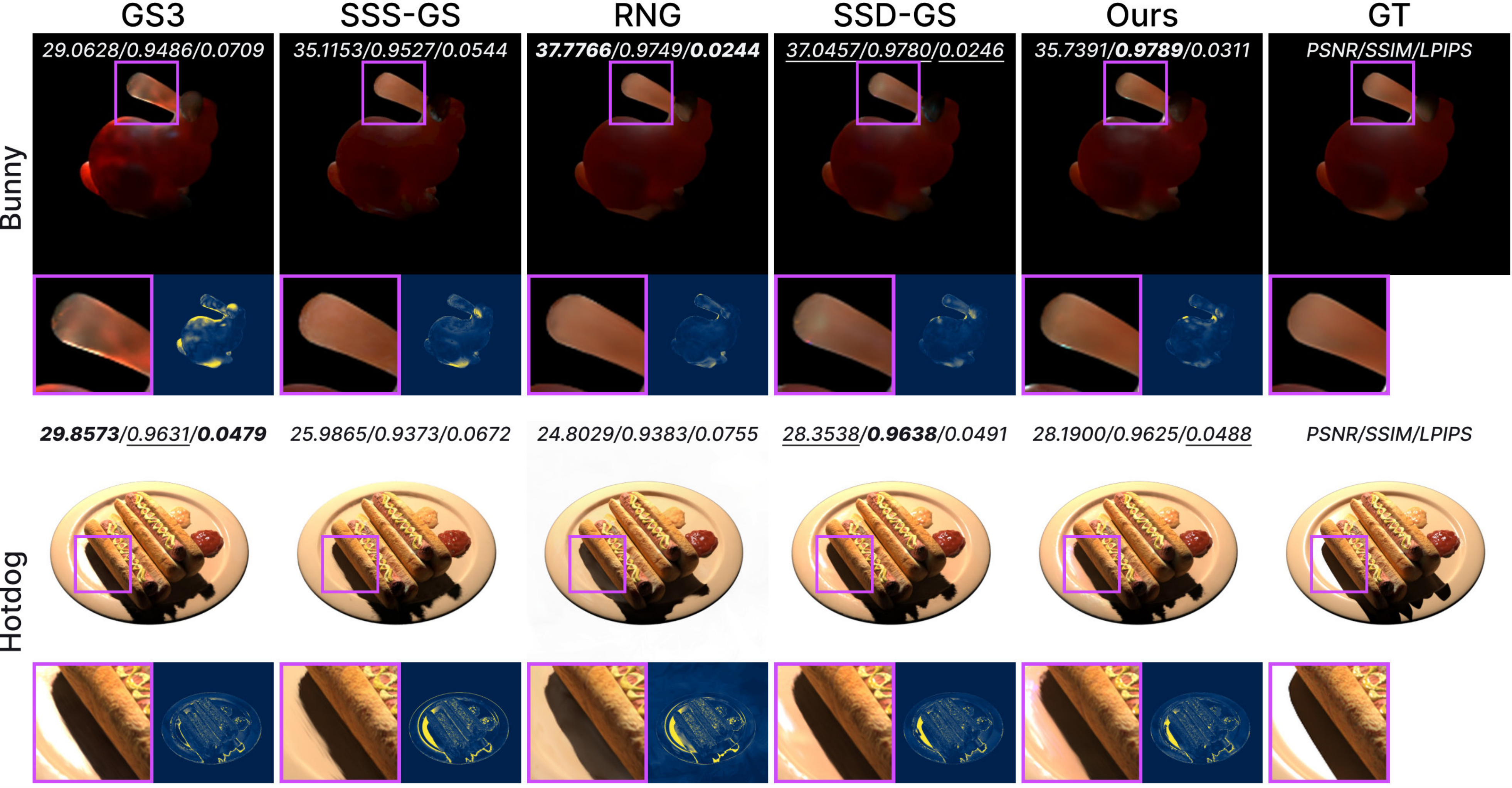}
    \caption{\textbf{Failure cases.} \textsc{Bunny} with the ear folded back and seen edge-on, where the tip renders darker than the capture, and \textsc{Hotdog}, where the plate under the bun renders darker. Under each render the magnified square marks the failing region at the same image coordinates in every column, beside the error map $\lvert\text{render}-\text{GT}\rvert$ averaged over color channels and clipped at $0.15$ on one shared ramp. Per-frame PSNR/SSIM/LPIPS are printed on each panel, \textbf{best} in bold and \underline{second} underlined within a row.}
    \label{fig:failure}
\end{figure}

\end{document}